\def\@captype{table}
\declaretheorem[name=Hypothesis,refname={Hypothesis,Hypotheses}]{hyp}
\newcommand{\bmf}[1]{\bm{\mathsf{#1}}}
\newcommand{\R}{\mathbb{R}}
\newcommand{\vf}{\bmf{f}}
\newcommand{\va}{\bmf{a}}
\newcommand{\vg}{\bmf{g}}
\newcommand{\vG}{\bmf{G}}
\newcommand{\vO}{\bmf{O}}
\newcommand{\vh}{\bmf{h}}
\newcommand{\vq}{\bmf{q}}
\newcommand{\vw}{\bmf{w}}
\newcommand{\vx}{\bmf{x}}
\newcommand{\vy}{\bmf{y}}
\newcommand{\vY}{\bmf{Y}}
\newcommand{\vz}{\bmf{z}}
\newcommand{\vu}{\bmf{u}}
\newcommand*\diff{\mathop{}\!\mathrm{d}}
\newcommand\indep{\protect\mathpalette{\protect\independenT}{\perp}}
\def\independenT#1#2{\mathrel{\rlap{$#1#2$}\mkern2mu{#1#2}}}
\title{Improving Compositional Generalization using\\ Iterated Learning and Simplicial Embeddings}
\author{%
  Yi Ren\\
  University of British Columbia\thanks{Work done in part during an internship at Mila.}\\
  \texttt{renyi.joshua@gmail.com} \\
   \And
   Samuel Lavoie \\
   Université de Montréal \& Mila \\
   \texttt{samuel.lavoie.m@gmail.com} \\
   \And
   Mikhail Galkin \\
   Intel AI Lab \\
   \texttt{mikhail.galkin@intel.com} \\
   \And
   Danica J. Sutherland \\
   University of British Columbia \& Amii \\
   \texttt{dsuth@cs.ubc.ca} \\
   \And
   Aaron Courville \\
   Université de Montréal \& Mila \\
   \texttt{aaron.courville@gmail.com} \\
}
\begin{document}

\maketitle

\begin{abstract}
Compositional generalization, the ability of an agent to generalize to unseen combinations of latent factors, is easy for humans but hard for deep neural networks. 
A line of research in cognitive science has hypothesized a process, ``iterated learning,'' to help explain how human language developed this ability; the theory rests on simultaneous pressures towards compressibility (when an ignorant agent learns from an informed one) and expressivity (when it uses the representation for downstream tasks). 
Inspired by this process, we propose to improve the compositional generalization of deep networks by using iterated learning on models with simplicial embeddings, which can approximately discretize representations.
This approach is further motivated by an analysis of compositionality based on Kolmogorov complexity.
We show that this combination of changes improves compositional generalization over other approaches, demonstrating these improvements both on vision tasks with well-understood latent factors and on real molecular graph prediction tasks where the latent structure is unknown.
%
\end{abstract}

\section{Introduction}
\label{sec:intro}
Deep neural networks have shown an amazing ability to generalize to new samples on domains where they have been extensively trained,
approaching or surpassing human performance
on tasks including image classification \cite{russakovsky2015imagenet}, Go \cite{silver2016mastering}, reading comprehension \cite{kenton2019bert}, and more.
A growing body of literature, however, demonstrates that some tasks that can be easily solved by a human can be hard for deep models.
One important such problem is compositional generalization (\cite{fodor1988connectionism}, \textit{comp-gen} for short).
For example, 
\citet{schott2022visual} study manually-created vision datasets where the true generating factors are known, and demonstrate that a wide variety of current representation learning methods struggle to learn the underlying mechanism.
To achieve true ``artificially intelligent'' methods that can succeed at a variety of difficult tasks,
it seems necessary to demonstrate compositional generalization.
One contribution of this paper is to lay out a framework towards understanding and improving compositional generalization,
and argue that most currently-common training methods fall short.

In wondering how deep networks can learn to compositionally generalize,
we might naturally ask: how did humans achieve such generalization?
Or, as a particular case, how did human languages evolve components (typically, words) that can systematically combine to form new concepts?
This has been a long-standing question in cognitive science and evolutionary linguistics.
One promising hypothesis is known as 
\emph{iterated learning} (IL),
a procedure simulating cultural language evolution \cite{kirby2008cumulative}.
Aspects of this proposal are supported by lab experiments \cite{kirby2015compression}, a Bayesian model \cite{beppu2009iterated},
the behavior of neural networks in a simple emergent communication task \cite{nil},
and real tasks like machine translation \cite{lu2020countering} and visual question answering \cite{vani2021iterated}.

To link the study in cognitive science and deep learning,
we first analyze the necessary properties of representations in order to generalize well compositionally.
By linking the compositionality and the Kolmogorov complexity,
we find iteratively resetting and relearning the representations can introduce compressibility pressure to the representations,
which is also the key to the success of iterated learning.
To apply iterated learning in a general representation learning problem,
we propose to split the network into a backbone and a task head,
and discretize the representation at the end of the backbone using \emph{simplicial embeddings} (SEM, \cite{sem}).
This scheme is more practical than LSTM \cite{lstm} encoders previously used for neural iterated learning \cite{nil}.
We observe in various controlled vision domains that SEM-IL can enhance compositional generalization by aligning learned representations to ground-truth generating factors.
The proposed method also enhances downstream performance on molecular graph property prediction tasks, where the generating process is less clear-cut.

\section{Compositional Generalization}
\label{sec:comp_gen}

Generalization is a long-standing topic in machine learning.
The traditional notion of (in-distribution) generalization
assumes that training and test samples come from the same distribution,
but this is insufficient for many tasks:
we expect a well-trained model to generalize to some novel scenarios that are unseen during training.
One version of this is \emph{compositional generalization} (comp-gen) \citep{fodor2002compositionality},
which requires the model to perform well on novel combinations of semantic concepts.

\subsection{Data-generating assumption and problem definition}
Any type of generalization requires some ``shared rules'' between training and test distributions.
We hence assume a simple data-generating process that both training and test data samples obey.
In \cref{fig:datagen_ladder},
the semantic generating factors, also known as latent variables, are divided into two groups:
the task-relevant factors (or semantic generating factors) $\vG=[G_1,..., G_m]$,
and task-irrelevant (or noise) factors $\vO$.
This division depends on our understanding of the task;
for example,
if we only want to predict the digit identity of an image in the color-MNIST dataset \cite{irm},
then $m=1$ and $G_1$ represents the digit identity.
All the other generating factors such as color, stroke, angle, and possible noise
are merged into $\vO$.
If we want to predict a function that depends on both identity and color,
e.g.\ identifying blue even numbers,
we could have $\vG=[G_1, G_2]$ with $G_1$ the identity and $G_2$ the color.

Each input sample $\vx\in\mathscr{X}$ is determined 
by a deterministic function $\mathsf{GenX}(\vG, \vO)$.
The task label(s) $\vy\in\mathscr{Y}$ only depend on the factors $\vG$ and possible independent noise $\epsilon$,
according to the deterministic function $\mathsf{GenY}(\vG,\epsilon)$.
Note $(\vx, \vO) \indep (\vy, \epsilon) \mid \vG$,
and that $\vO$, $\vG$, and $\epsilon$ are independent.
The data-generating distribution $P(\vx,\vy)$ is determined by the latent distributions $P(\vG)$ and $P(\vO)$,
along with the $\mathsf{GenX}$ and $\mathsf{GenY}$.
We assume $\mathsf{GenX}$ and $\mathsf{GenY}$ are fixed across environments (the ``rules of production'' are consistent),
while $P(\vG)$ and $P(\vO)$ might change between training and test.\footnote{%
    This differs from the classical setting of covariate shift:
    $P(\vy \mid \vx)$ might change due to the shift in $P(\vG)$.
}

For compositional generalization, we wish
to model the problem of generalizing to new combinations of previously seen attributes:
understanding ``red circle'' based on having seen ``red square'' and ``blue circle.''
Thus, we may assume that the supports of $P(\vG)$ are non-overlapping between train and test.
(If this assumption is not true, it only makes the problem easier.)
In summary, 
our goal is to find an algorithm $\mathscr{A}$
such that,
when trained on a dataset $\mathscr{D}_{train}\sim P_{train}^n$, $\mathscr{A}$
achieves small test risk $\mathscr{R}_{P_{test}}(\mathscr{A}(\mathscr{D}_{train}))$.
Here $P_{train}$ and $P_{test}$ should satisfy these conditions:
\begin{itemize}
    \item $P_{train}$ and $P_{test}$ have $\vG$, $\vO$, $\epsilon$ jointly independent, and $\vx = \mathsf{GenX}(\vG, \vO)$, $\vy = \mathsf{GenY}(\vG,\epsilon)$.
    \item $\mathsf{GenX}$ and $\mathsf{GenY}$ are the same deterministic functions for $P_{train}$ and $P_{test}$.
    \item In challenging cases, we may have $\mathsf{supp}[P_{train}(\vG)]\cap\mathsf{supp}[P_{test}(\vG)]=\emptyset$.
\end{itemize}

\begin{figure}[t]
\vskip -0.1 in
    \begin{center}
    \centerline{\includegraphics[width=0.8\columnwidth,trim=0 0 0 0, clip]{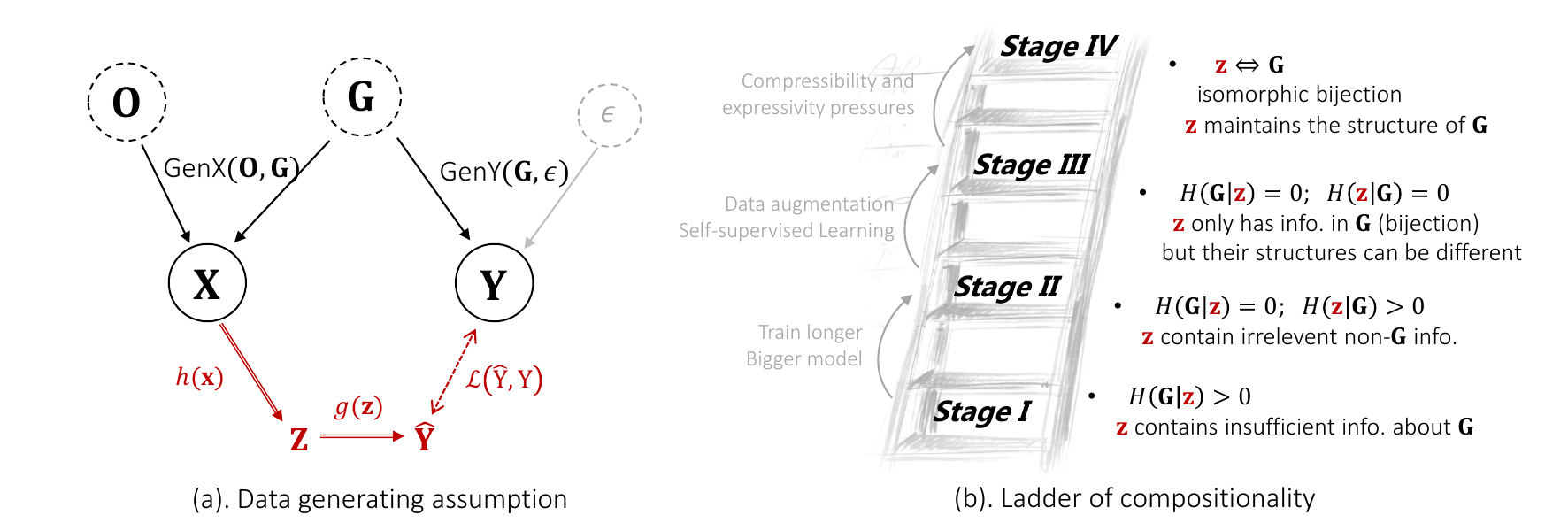}}
    \caption{Left: the data-generating assumption and a typical representation learning method (in red).
             We use the model $(g\circ h)(\vx)$ for downstream predictions.
             Right: the ladder of compositionality stating the requirements of $\vz$ using the entropy-related measurements; 
             see \cref{sec:app_ladder_of_representation} for more.
            }
    \label{fig:datagen_ladder}
    \end{center}
\vskip -0.3 in
\end{figure}

\subsection{Representationl Learning and Ladder of Compositionality}
For compositional generalization, we expect that the model must extract atomic semantic features from the training data,
and systematically re-combine them in a procedure akin to how the data is generated \cite{kirby2008cumulative}.
We thus consider a typical representation learning framework,
which resembles the inverse of the data generation process
(\cref{fig:datagen_ladder}(a), bottom).
We use a backbone $h:\mathscr{X}\rightarrow\mathscr{Z}$
to convert the input signal $\vx$ into a representation $\vz$,
and a task head $g:\mathscr{Z}\rightarrow\mathscr{Y}$
to solve the given task based on that representation $\vz$.
The prediction of the model is $\hat{\vy}=(g\circ h)(\vx)$.

Intuitively, we would like our learned $\vz$ to uncover the hidden $\vG$,
and $g(\vz)$ to recover $\mathsf{GenY}(\vG,\epsilon)$.
We thus analyze how the relationship between $\vz$ and $\vG$ influences the model's generalization capability,
building off principles such as information bottleneck \citep{tishby2000information}.
Inspired by the ``ladder of causation'' \cite{pearl2018book},
we propose a ``ladder of compositionality'' in \cref{fig:datagen_ladder}(b),
which outlining a series of conditions on $\vz$ and $\vG$.
We hypothesize that comp-gen roughly requires reaching the highest rung of that ladder:
\begin{hyp}
    To generalize compositionally, the learned $\vz$ should capture exactly the information in $\vG$ and nothing more ($\vG$ to $\vz$ should be a bijection),
    and moreover it should preserve the ``structure'' of $\vG$
    (i.e.\ the mapping from $\vG$ to $\vz$ should be an isomorphism).
    \label{prop:z_sysgen}
\end{hyp}
More on this hypothesis, the ladder,
and relationship to models of disentanglement \cite{higgins2018towards}
are discussed in \cref{sec:app_ladder_of_representation}.
In short, we find that a model trained using common learning methods relying on mutual information between input $\vx$ and supervision $\vy$ cannot reliably reach the final stage of the ladder -- it is necessary to seek other inductive biases in order to generalize compositionally.

\section{Compressibility pressure and Compositional mapping}
\label{sec:compressibility_pressure}

From the analysis above,
we need to find other inductive biases to obtain compositional mappings.
Inspired by how compositionality emerges in human language,\footnote{Human languages are examples of compositional mapping \cite{hockett1960origin}:
words are composed of combinations of reusable morphemes,
and those words in turn are combined to form complex sentences following specific \textit{stable rules}.
These properties make our language unique among natural communication systems and enable humans to convey an open-ended set of messages in a compositional way \cite{kirby2015compression}.
Researchers in cognitive science and evolutionary linguistics have proposed many explanations for the origin of this property;
one persuasive method for simulating it is iterated learning \cite{kirby2008cumulative}.}
we speculate that the \emph{compressibility} pressure is the key.
Note that this pressure does not refer to compressing information from $\vx$ to $\vz$ (as in Stage III does),
but whether a mapping can be expressed in a compact way by reusing common rules.
In this section,
we will first link compressibility pressure to Kolmogorov complexity by defining different mappings using group theory.
As the Kolmogorov complexity is hard to compute,
making explicit regularization dificult,
we propose to implicitly regularize via \emph{iterated learning},
a procedure in cognitive science proposed to increase compositionality in human-like language.

\subsection{Compositional mappings have lower Kolmogorov complexity}
\label{sec:comp_mapping_low_KC}
From Occam's razor, we know efficient and effective mappings are more likely to capture the ground truth generating mechanism of the data, and hence generalize better.
The efficiency is determined by how compressed the mapping is,
which can also be measured by Kolmogorov complexity \citep{li-vitanyi,illya_KC}.
To build a link between compositionality and Kolmogorov complexity,
we can first describe different bijections between $\vz$ and $\vG$ using group theory, and then use the description length to compare the complexity of a typical element.
Specifically, assuming $\vz\in\mathscr{Z}, \vG\in\mathscr{G}$ and $|\mathscr{Z}|=|\mathscr{G}|$,
the space of all bijections between $\vz$ and $\vG$ is an isomorphism of a symmetric group $S_{|\mathscr{G}|}$.
If $\vG=[G_1,...,G_m]$ and each $G_m$ has $v$ different possible values, $|\mathscr{G}|=v^m$.
For clarity in the analysis,
we assume $\vz$ also has the same shape.
Then, any bijection between $\vz$ and $\vG$ can be represented by an element in $S_{v^m}$.

The space of compositional mapping,
which is a subset of all bijections,
has more constraints.
Recall how a compositional mapping is generated (see \cref{sec:app_ladder_of_representation_stage4} for more details):
we first select $z_i$ for each $G_j$ in a non-overlapping way.
Such a process can be represented by an element in $S_m$.
After that, we will assign different ``words'' for each $z_i$,
which can be represented by an element in $S_v$.
As we have $m$ different $z_i$, this procedure will be repeated $m$ times.
In summary, any compositional mapping can be represented by an element in the group $S^m_v\rtimes S_v\in S_{v^m}$,
where $\rtimes$ is the semidirect product in group theory.
The cardinality of $S_{v^m}$ is significantly larger than $S^m_v\rtimes S_v$,
and so a randomly selected bijection is unlikely to be compositional.
Thus
\begin{restatable}[Informal]{prop}{KC}
    \label{prop_1}
    For $m,v\geq2$, among all bijections, any compositional mapping has much lower Kolmogorov complexity than a typical non-compositional mapping.
\end{restatable}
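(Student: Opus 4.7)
The plan is to compare Kolmogorov complexity via cardinalities of the two groups. First, I would compute $|S_{v^m}| = (v^m)!$ and $|S_v^m \rtimes S_m| = (v!)^m \cdot m!$ (reading the paper's $\rtimes S_v$ as a typo for the wreath product $S_v \wr S_m$, since the $m$ coordinate permutations come from $S_m$ and the $m$ copies of intra-coordinate relabeling come from $S_v$). Using Stirling, the log-cardinalities are
\[
\log_2 |S_{v^m}| \;=\; \log_2 (v^m)! \;\sim\; m\, v^m \log_2 v,
\qquad
\log_2 |S_v^m \rtimes S_m| \;\le\; m\,\log_2(v!) + \log_2(m!) \;\sim\; m v \log_2 v + m\log_2 m,
\]
which already reveals an exponential (in $m$) gap.

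Next, I would translate cardinalities into Kolmogorov complexity via two standard incompressibility facts. Upper bound for the compositional side: any element of $S_v^m \rtimes S_m$ is specified by an index into this subgroup plus an $O(1)$-bit program that reconstructs the subgroup structure from $(m,v)$ and decodes the index; hence its complexity is at most $m\log_2(v!) + \log_2(m!) + 2\log_2(mv) + O(1)$. Lower bound for the ambient side: a simple counting argument shows that for every constant $c$, all but a $2^{-c}$ fraction of elements of $S_{v^m}$ have $K(\cdot) \ge \log_2(v^m)! - c$; this formalizes the word ``typical''. Subtracting the two bounds gives the desired asymptotic gap of order $m v^m \log_2 v$, which is ``much lower'' for any $m,v \ge 2$ (and grows unboundedly as $m$ or $v$ does).

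I would then formalize the informal statement as: with probability at least $1 - 2^{-c}$ over a uniformly random bijection $\pi \in S_{v^m}$, and for every compositional $\sigma \in S_v^m \rtimes S_m$,
\[
K(\pi) - K(\sigma) \;\ge\; \log_2(v^m)! - m\log_2(v!) - \log_2(m!) - O(\log(mv)) - c,
\]
and note via Stirling that the right-hand side is $\Theta(m v^m \log v)$. This is the formal version of ``much lower.''

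The main obstacle I expect is not the counting but pinning down the informal quantifiers cleanly: ``typical'' must be defined (uniform measure on $S_{v^m}$ seems natural), and one must be careful that the $O(1)$ and $O(\log(mv))$ overheads in the two complexity bounds are genuinely dominated by the cardinality gap rather than absorbed into it. A second, subtler issue is fixing the group-theoretic description of a compositional mapping (the semidirect product factorization in the paper looks like a typo for the wreath product $S_v \wr S_m$), so I would state the decomposition explicitly in an appendix before invoking it, and verify that the two-step encoding---first the coordinate permutation in $S_m$, then the $m$ per-coordinate permutations in $S_v$---really enumerates every compositional bijection exactly once, so no overcounting spoils the upper bound.
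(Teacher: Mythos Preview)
Your proposal is correct and follows essentially the same approach as the paper: both arguments upper-bound the complexity of a compositional mapping by the log-cardinality of the wreath-product subgroup, lower-bound the complexity of a ``typical'' bijection in $S_{v^m}$ via the standard incompressibility counting argument, and compare the two quantities. Your treatment is in fact more careful than the paper's---you correctly read $S_v^m \rtimes S_v$ as a typo for the wreath product with $S_m$, and your compositional bound $m\log_2(v!)+\log_2(m!)$ fixes what appears to be a dropped factor of $m$ in the paper's stated bound $v\log_2 v + m\log_2 m$---but the skeleton of the argument is the same.
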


We prove this by constructing descriptive protocols for each bijection.
As a compositional mapping has more \emph{reused rules},
its description length can be smaller (see \cref{sec:app_comp_KC} for more details).

\subsection{Compressibility pressure is amplified in iterated learning}
Now, our target is finding bijections with higher compositionality and lower Kolmogorov complexity,
which are both non-trivial.
Because the ground truth $\vG$ is usually inaccessible and the Kolmogorov complexity is hard to calculate.
Fortunately, researchers find that human language also evolved to become more compositional without knowing $\vG$.
Authors of \cite{kirby2015compression} hypothesize that the \emph{compressibility pressure},
which exists when an innocent agent (e.g., a child) learns from an informed agent (e.g., an adult),
plays an important role.
Such pressure is reinforced and amplified when the human community repeats this learning fashion for multiple generations.

However, the aforementioned hypothesis assumes that simplicity bias is inborn in the human cognition system.
Will deep neural agents also have similar preferences during training?
The answer is yes.
By analyzing an overparameterized model on a simple supervised learning problem,
we can strictly prove that repeatedly introducing new agents to learn from the old agent (then this informed agent becomes the old agent for the next generation) can exert a non-trivial regularizing effect on the number of ``active bases'' of the learned mapping.
Restricting the number of active bases encourages the model to reuse the learned rules.
In other words,
this regularization effect favors mappings with lower Kolmogorov complexity,
which is exactly what we expect for compositional generalization.
Due to the space limits,
we left the formulation and proof of this problem in \cref{sec:app_KC_IL}.

\subsection{Complete the proposed solution}
We thus expect that iteratively resetting and relearning can amplify the compressibility pressure,
which helps us to reach the final rung of the ladder from the third.
Before that,
we need another pressure to reach third rung (i.e., ensure a bijection between $\vz$ and $\vG$).
\emph{Expressivity pressure},
constraining the learned mapping to be capable enough to accomplish the downstream tasks,
is what we need.

The complete iterated learning hypothesis of \citet{kirby2015compression} claims that the compositional mapping emerges under the interaction between the compressibility pressure (i.e., efficiency) and the expressivity pressure (i.e., effectiveness).
Inspired by this,
we propose to train a model in generations consisting of two phases.
At the $t$-th generation,
we first train the backbone $h$ in an \emph{imitation phase},
where a student $h^S_t$ learns to imitate $\vz$ sampled from a teacher $h^T_t$.
As analyzed above,
iteratively doing so will amplify the compressibility pressure.
Then, in the following \emph{interaction phase},
the model $g_t \circ h^S_t$ follows standard downstream training to predict $\vy$.
The task head $g_t$ is randomly initialized and fine-tuned together with the backbone in this phase.
By accomplishing this phase,
the expressivity pressure is introduced.
The fine-tuned backbone $h^S_t$ then becomes the teacher $h^T_{t+1}$ for the next generation, and we repeat,
as illustrated in \cref{fig:sem} and \cref{alg:il}.

\begin{figure}[t]
    \begin{center}
    \centerline{\includegraphics[width=0.98\textwidth,trim=10 0 0 20, clip]{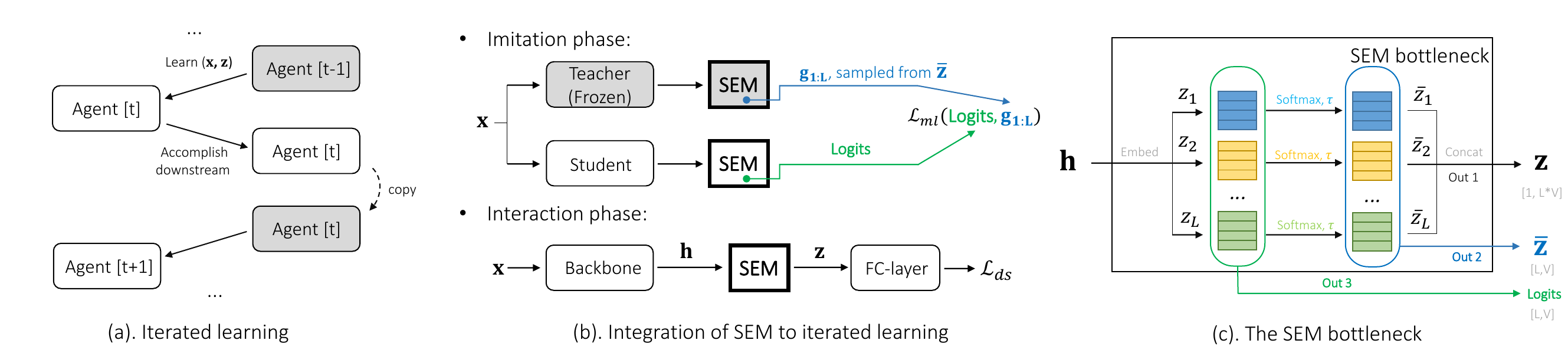}}
    \caption{An illustration of iterated learning and SEM layer design.}
    \label{fig:sem}
    \end{center}
\vskip -0.3in
\end{figure}

Another problem with applying iterated learning to deep neural networks is how to create the discrete message, i.e., $\vz$.
Discretization is not \emph{necessary}:
for example, the imitation phase could use $L_2$ loss to match a student's continuous representations to the teacher's.
We find greatly improved performance with our discretization scheme, however, due to much-increased compressibility pressure.
It is also possible \cite{nil}
to use e.g.\ an LSTM encoder at the end of $h(\vx)$ to produce discrete $\vz$,
and an LSTM decoder at the start of $g(\vz)$.
The interaction phase is then not directly differentiable;
though many estimator options exist \cite{williams1992simple,bengio2013estimating,jang2016categorical},
training tends to be difficult due to high bias and/or variance.

Instead,
we consider a simplicial embedding layer (SEM, \cite{sem}),
which has proven effective on many self-supervised learning tasks.
As illustrated in \cref{fig:sem}(c),
a dense representation $\vh$ (the output of the original backbone)
is linearly transformed into $m$ vectors $z_i\in\mathbb{R}^v$.
Then we apply a separate softmax with temperature $\tau$ to each $z_i$,
yielding $\bar z_i$ which are, if the temperature is not too high, approximately sparse;
the $\bar z_i$ are then concatenated to a long vector $\vz$.
The overall process is
\begin{equation}
    \bar{z}_i = \mathsf{Softmax}_\tau(z_i)
    = \left[ \frac{e^{z_{ij}/\tau}}{\sum_{k=1}^V e^{z_{ik}/\tau}} \right]_j \in \R^{v}
    \qquad
    \vz = \begin{bmatrix} \bar z_1^\top & \dots & \bar z_m^\top \end{bmatrix}^\top \in \R^{m v}
    \label{eq:sem}
.\end{equation}
By using an encoder with a final SEM layer,
we obtain an approximately-sparse $\vz$.
In the imitation phase,
we generate discrete pseudo-labels by sampling from the categorical distribution defined by each $\bar z_i$,
then use cross-entropy loss so that the student is effectively doing multi-label classification to reconstruct the teacher's representations.
In the imitation phase, the task head $g$ operates directly on the long vector $\vz$.
The full model $g \circ h$ is differentiable,
so we can use any standard task loss.
Pseudocode for the proposed method, SEM-IL, is in the appendix (\cref{alg:il}).

\section{Analysis on Controlled Vision Datasets}
\label{sec:toy_experiments}

We will first verify the effectiveness of the proposed SEM-IL method on controlled vision datasets, where the ground truth $\vG$ is accessible.
Thus, we can directly observe how $\vz$ gradually becomes more similar to $\vG$,
and how the compressibility and expressivity pressures affect the training process.
In this section,
we consider a regression task on 3dShapes \cite{3dshapes18}, where recovering and recombining the generating factors is necessary for systematic generalization.
The detailed experimental settings and results on additional similar datasets,
dSprites~\cite{dsprites17} and MPI3D-real~\cite{mpi3d},
are given in \cref{sec:app_more_experiments_toy}.

\subsection{The Effectiveness of SEM-IL}
\label{sec:toy_experiments_results}

\paragraph{Better comp-gen performance}
We first show the effectiveness of the proposed method using results on 3dShapes,
containing images of objects with various colors, sizes, and orientations against various backgrounds.
Here $\vG$ numerically encodes floor hue, wall hue, object hue, and object scale into discrete values,
and the goal is to recover a particular linear function of that $\vG$.
(Results for a simple nonlinear function were comparable.)

We compare five algorithms:
\begin{itemize}[parsep=1ex,itemsep=0pt,topsep=0pt]
    \item Baseline: directly train a ResNet18 \cite{resnet} on the downstream task.
    \item SEM-only: insert an SEM layer to the baseline model.
    \item IL-only: train a baseline model with \cref{alg:il}, using MSE loss during imitation.
    \item SEM-IL: train an SEM model with \cref{alg:il}.
    \item Given-G: train an SEM model to reproduce the true $\vG$ (which would not be known in practice), then fine-tune on the downstream task.
\end{itemize}

In the first panel of \cref{fig:toy_results},
we see that the baseline and SEM-only models perform similarly on the training set;
IL-based methods periodically increase in error at the beginning of each generation,
but are eventually only slightly worse than the baselines on training data.
On the test set, however,
evaluating compositional generalization by using values of $\vG$ which did not appear in training,
SEM-IL brings significant improvement compared with other methods.
Using only SEM or only IL gives no improvement over the baseline, however;
it is only their combination which helps,
as we will discuss further shortly.
The (unrealistic) oracle method Given-G is unsurprisingly the best,
since having $\vz$ similar to $\vG$ is indeed helpful for this task.

\paragraph{How $\vz$ evolves during learning}
To see if better generalization ability is indeed achieved by finding $\vz$ that resembles the structure of $\vG$,
we check their topological similarity\footnote{This measure is also known as the distance correlation \cite{distcorr}; it is a special case of the Hilbert-Schmidt Independence Critierion (HSIC, \cite{hsic}) for a particular choice of kernel based on $d_z$ and $d_G$ \cite{hsic-distances}.}
\begin{equation}
    \rho(\vz,\vG)\triangleq\mathsf{Corr}\left( d_z(\vz^{(i)},\vz^{(j)}) , d_G(\vG^{(i)},\vG^{(j)}) \right) 
    \label{eq:topsim}
\end{equation}
where $d_z$ and $d_G$ are distance metrics,
$\vz^{(i)}$ is the predicted representation of $\vx^{(i)}$,
and $\vG^{(i)}$ is the corresponding ground-truth generating factors.
This measurement is widely applied to evaluate the compositionality of the mappings in cognitive science \cite{topsim} and emergent communication \cite{nil}.
Following existing works,
we use the Hamming distance for $\vG$ and discretized $\vz$ in SEM-based methods,
and cosine distance for continuous $\vz$ in non-SEM methods.
We expect $h(\vx)$ to map $\vx$ with similar $\vG$ to close $\vz$, and dissimilar $\vx$ to distant $\vz$,
so that $\rho(\vz,\vG)$ will be high.

The third panel of \cref{fig:toy_results} shows that the SEM-only model quickly reaches a plateau after 200 epochs and then slowly decreases,
while SEM-IL, after briefly stalling at the same point, continues to increase to a notably higher topological similarity.
In the last panel, however,
the IL-only method doesn't improve $\rho$ over the baseline:
it seems both parts are needed.

\begin{figure}[t]
    \begin{center}
    \centerline{\includegraphics[width=0.98\columnwidth,trim=0 0 0 0, clip]{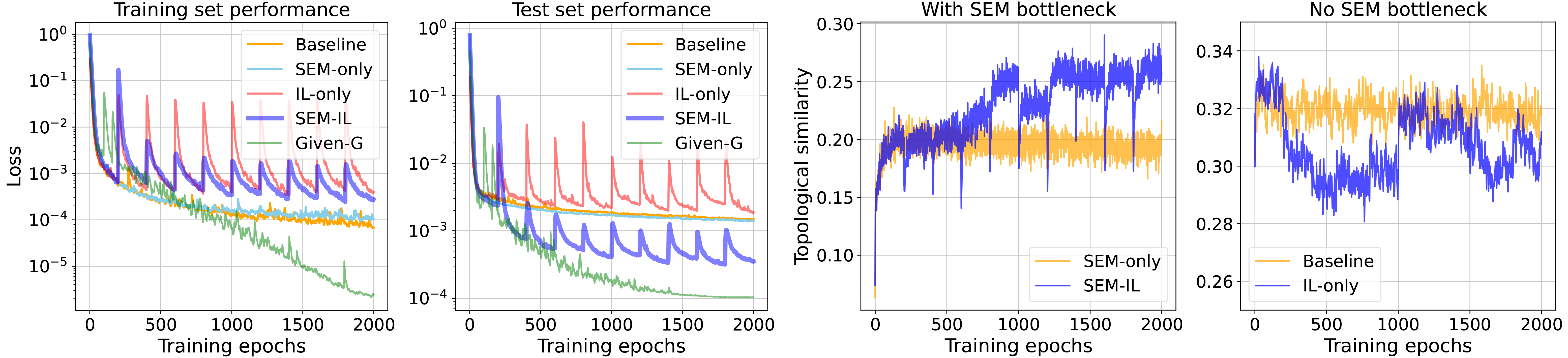}}
    \caption{Left: compositional generalization performance on a regression task.
                Right: topological similarity for IL and non-IL methods. 
                Note the values of $\rho$ in the two panels are not comparable,
                as the structure of $\vz$ in the two settings (with or without SEM) is different.}
    \label{fig:toy_results}
    \end{center}
\vskip -0.3in
\end{figure}

\subsection{Discretized Representation is Beneficial for the Imitation Phase of IL}
\label{sec:toy_experiments_explanation}
To explain why SEM and IL cooperate well,
we need to look deeper into how the compressibility pressure influences the learning of representations.
This pressure induced by iterated learning, 
which helps us to find mappings with lower Kolmogorov complexity,
leads to representations that are more compositional and systematic \cite{kirby2015compression}.
However, in prior works, these mappings were only considered in conjunction with some discretized representation \cite{nil,primacy_bias}.
While IL could be used with continuous representation during the imitation phase,
similar to born-again networks \cite{furlanello2018born},
we found that our algorithm benefits a lot from the discretized representations.

To get a clear picture of why discretized representations are so important,
we divide $h(\vx)$ into $m$ sub-mappings $h_i(\vx)$,
which map $\vx$ to $\bar{z}_i \in [0, 1]^v$.
We can understand each $\bar{z}_i$ as a categorical distribution over $v$ different possible values.
As such, during training,
the model learns discrete features of the dataset and assigns confidence about each feature for every sample.
The neural network will tend to more quickly learn simpler mappings \cite{goyal2022inductive, Bengio2020A}, and will assign higher
confidence according to the mapping it has learned. In other words, if a mapping does not align well with $\vG$, it is more likely to give
idiosyncratic learned $\bar{z}_i$, and will lead to low confidence for most samples. On the contrary,
$\bar{z}_i$ belonging to compositional mappings will be more general, and on average tend towards higher confidence.

The imitation phase reinforces this bias when the new student learns from the \textit{sampled} pseudo labels $\vg_i$ from the teacher's prediction $\bar{z}_i$.
As such, confident predictions, 
which are more likely to belong to the compositional mappings,
will be learned faster (and harder to forget) by the student.
On the contrary, for less confident features where $P(\bar{z}_i \mid \vx)$ is flat,
$\vg_i$ could change across epochs.
This makes it hard for the student to remember any related $(\vx,\vg_i)$.
For example,
a student will be reluctant to build a stable mapping between ``red'' and $z_1$ if the teacher communicates
$(\text{``red square''},\vg_1=0)$,
$(\text{``red square''},\vg_1=1)$,
$(\text{``red square''},\vg_1=2)$
in three consecutive epochs.

Furthermore, using the sampled pseudo-labels can help the student to align the learned $(\vx,\vg_i)$ better.
Assume during training, the student already remembers some pairs like 
$(\text{``blue circle''},\vg_1=0)$,
$(\text{``blue square''},\vg_1=0)$,
$(\text{``blue star''},\vg_1=0)$,
but the teacher is not confident in
$(\text{``blue apple''},\vg_1)$,
perhaps because apples are rarely blue.
Following the analysis above,
as $P(\bar{z}_1 \mid \text{blue apple})$ is flat,
the teacher may generate $\vg_1\neq 0$ a significant portion of the time.
However,
if the teacher happens to generate $\vg_1=0$ at some point,
the student would learn 
$(\text{``blue apple''},\vg_1=0)$
faster than those with $\vg_1\neq 0$,
because it aligns well with the other information stored in the student network.
The parameter updates caused by the learning of other $(\text{``blue [shape]''},\vg_1=0)$
will also promote the learning of
$(\text{``blue apple''},\vg_1=0)$,
similar to how ``noisy'' labels are fixed as described by \cite{ren2022better}.

\begin{figure}[t]
    \begin{center}
    \centerline{\includegraphics[width=0.98\columnwidth,trim=0 0 10 0, clip]{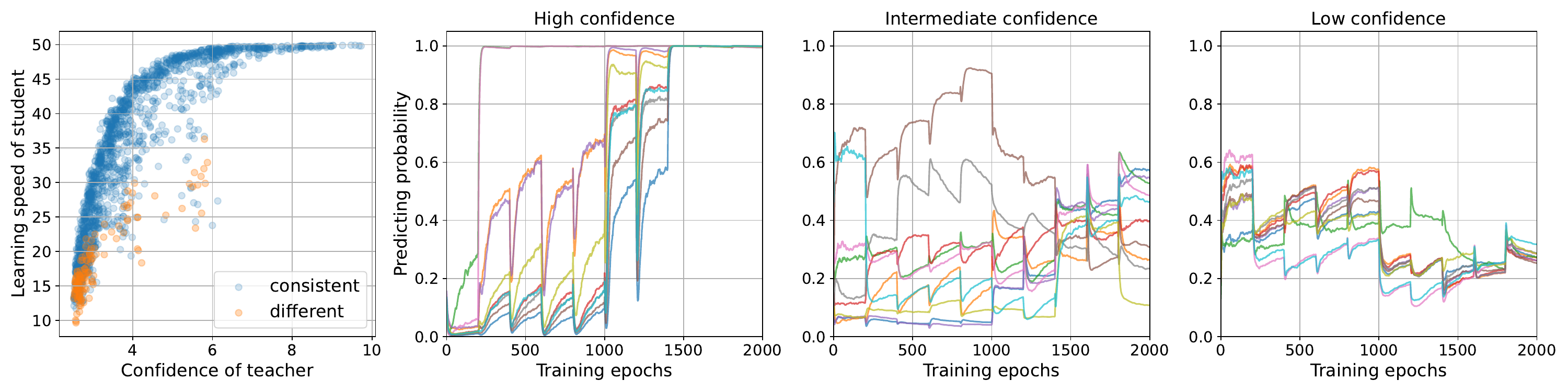}}
    \caption{First panel: correlation between teacher's confidence and student's learning speed for each $(\vx,\bar{z}_i)$;
            $\bar{z}_i$ is the prediction of the $l$-th attribute in imitation phase.
            ``Consistent'' means the student makes the same prediction as the teacher.
             Other panels: learning curves of the student's predictions.}
    \label{fig:toy_learning_path}
    \end{center}
\vskip -0.3in
\end{figure}

To support the explanations above,
we can first observe the correlation between the teacher's confidence and the model's learning speed for $(\vx,\vg_i)$.
Specifically, for each $\vx$,
the teacher makes $m$ predictions with the corresponding categorical distribution $\bar{z}_i$, $i \in [m]$.
For each $(\vx,\bar{z}_i)$,
the confidence is measured by the negative logarithm of the teacher's predicted probability,
$-\log [\bar{z}_i]_{\hat\jmath}$
where $\hat\jmath \in \operatorname{argmax}_j [\bar{z}_i]_j$.
The learning speed of $(\vx,\vg_i)$ is measured by the integral of the student's prediction with training time $t$,
i.e., $\sum_{t=0}[\hat{z}_i(t)]_j$,
where $j$ is the value provided by the teacher and $\hat{z}_i(t)$ is student's prediction at time $t$.
As illustrated in the first panel of Figure~\ref{fig:toy_learning_path},
the $(\vx,\vg_i)$ with higher confidence are usually learned faster by the student.

We also provide the learning curves of $(\vx,\vg_i)$ with high/intermediate/low confidence (each with 10 samples) in the other three panels of the figure.
The curves for high-confidence samples all converge to $[\hat{z}_i(t)]_j=1$ while those for low-confidence predictions could converge to a value less than 0.3.
This means the student might make predictions that are different from the teacher's supervision.
By highlighting such low-confidence $(\vx,\vg_i)$ pairs in the scatter plot,
we find they are all low-confidence samples.
Another interesting observation from the high-confidence curves is that some $(\vx,\vg_i)$ pairs are not remembered by the student in the first generation: they emerge at some point and gradually dominate as the training goes on.
This phenomenon matches our analysis of how the sampled pseudo-labels help the student align $(\vx,\vg_i)$ to its knowledge well.
To further support this explanation,
\cref{sec:app_more_experiments_toy}
shows that performance is substantially harmed by taking pseudo-labels
from the $\operatorname{argmax}$, rather than sampling from $\bar z_i$.

To recap,
this subsection provided an explanation (along with some supporting evidence) for why the combination of SEM and IL is so important,
based on the perspective of sample difficulty,
which we believe to be a significant factor in the success of this algorithm.

\section{Application: Molecular Property Prediction}
\label{sec:exp_real}

Given the success in controlled vision examples,
we now turn to a real problem where the true generative process is unknown.
We focus on predicting the properties of molecular graphs, for several reasons.
First, molecular graphs and their labels might follow a (chemical) procedure akin to that in \cref{fig:datagen_ladder}:
for instance, one $G_i$ might be the existence of a specific functional group, or the number of specific atoms.
Different molecular properties could then be determined by different subsets of $G_i$,
as we desired in the compositional generalization problem.
Furthermore, 
the generating mechanisms ($\mathsf{GenX}$ and $\mathsf{GenY}$)
should be consistent and determined by nature.
Second,
benchmark datasets in this community contain various types of tasks
(e.g., binary classification, multi-label classification, and regression)
with similar input signals:
performing well on different tasks will broaden the scope of our algorithm.
Furthermore,
the scaffold split used by most molecular datasets
corresponds well to the compositional generalization setup we consider here.
(We also try some more challenging splits, using structural information.) 
Last,
learning meaningful representations that uncover the generating mechanisms of molecules is important, of practical significance, and difficult:
it can potentially help predict the properties of unknown compounds,
or accelerate the discovery of new compounds with specific properties,
but scaling based on massive datasets as in recent work on vision or language seems more difficult.
We hope our analysis can provide a new perspective on this problem.

\subsection{Improvement on the Downstream Performance}

\begin{table}[h]
\centering
\caption{Downstream performance on different tasks. The numbers of AUROC and average precision are in percent form.
         For \texttt{PCQM}, we report the validation performance, as the test set is private and inaccessible.
         Means and standard deviations of 5 seeds are given. Valid/test-full means the standard train/val/test split provided by the dataset. Valid/test-half means we train the model on half of the training data which is \emph{less similar} to the validation and test sets. See \cref{sec:app_more_experiments_real} for more.}
\resizebox{\textwidth}{!}{
\begin{tabular}{cc|cccc|cccc|c}
\hline
\multicolumn{2}{c|}{\multirow{2}{*}{\begin{tabular}[c]{@{}c@{}}Model and \\ Algorithm\end{tabular}}} & \multicolumn{4}{c|}{molhiv (AUROC $\uparrow$)}                                                        & \multicolumn{4}{c|}{molpcba (Avg.Precision $\uparrow$)}                                               & PCQM (MAE $\downarrow$)  \\ \cline{3-11} 
\multicolumn{2}{c|}{}                                                                                & Valid-full              & Test-full               & Valid-half              & Test-half               & Valid-full              & Test-full               & Valid-half              & Test-half               & Valid                    \\ \hline
\multicolumn{1}{c|}{\multirow{4}{*}{GCN}}                      & Baseline                            & 82.41$\pm$1.14          & 76.25$\pm$0.38          & 75.65$\pm$0.91          & 72.31$\pm$1.86          & 21.44$\pm$0.25          & 22.13$\pm$0.46          & 21.13$\pm$0.38          & 20.78$\pm$0.62          & 0.125$\pm$0.002          \\
\multicolumn{1}{c|}{}                                          & Baseline+                           & 81.61$\pm$0.63          & 75.58$\pm$1.00          & 73.23$\pm$0.75          & 72.17$\pm$1.02          & 22.31$\pm$0.34          & 22.68$\pm$0.30          & 21.01$\pm$0.45          & 20.60$\pm$0.37          & 0.118$\pm$0.004          \\
\multicolumn{1}{c|}{}                                          & SEM-only                            & 84.00$\pm$1.10          & 78.40$\pm$0.67          & 74.84$\pm$1.57          & 72.81$\pm$2.32          & 26.39$\pm$0.66          & 25.89$\pm$0.71          & \textbf{22.79$\pm$0.91} & \textbf{22.09$\pm$1.02} & 0.106$\pm$0.002          \\
\multicolumn{1}{c|}{}                                          & \textbf{SEM-IL}                     & \textbf{84.89$\pm$0.68} & \textbf{79.09$\pm$0.67} & \textbf{78.48$\pm$0.67} & \textbf{74.02$\pm$0.78} & \textbf{28.81$\pm$0.72} & \textbf{27.15$\pm$0.74} & 22.59$\pm$0.84          & 21.90$\pm$0.81          & \textbf{0.102$\pm$0.005} \\ \hline
\multicolumn{1}{c|}{\multirow{4}{*}{GIN}}                      & Baseline                            & 81.76$\pm$1.04          & 76.99$\pm$1.42          & 76.95$\pm$1.40          & 71.63$\pm$2.21          & 23.09$\pm$0.32          & 22.64$\pm$0.49          & 20.52$\pm$0.39          & 20.15$\pm$0.42          & 0.109$\pm$0.003          \\
\multicolumn{1}{c|}{}                                          & Baseline+                           & 81.55$\pm$0.72          & 77.01$\pm$0.94          & 74.77$\pm$1.62          & 69.75$\pm$3.10          & 23.85$\pm$0.29          & 22.91$\pm$0.40          & 21.71$\pm$0.12          & 20.98$\pm$0.27          & 0.108$\pm$0.003          \\
\multicolumn{1}{c|}{}                                          & SEM-only                            & 83.05$\pm$0.90          & 78.21$\pm$0.78          & 76.29$\pm$2.06          & 72.70$\pm$4.94          & 26.01$\pm$0.52          & 25.66$\pm$0.47          & 22.26$\pm$0.39          & 21.50$\pm$0.48          & 0.106$\pm$0.004          \\
\multicolumn{1}{c|}{}                                          & \textbf{SEM-IL}                     & \textbf{83.32$\pm$1.51} & \textbf{78.61$\pm$0.73} & \textbf{78.06$\pm$1.24} & \textbf{72.89$\pm$0.48} & \textbf{29.30$\pm$0.48} & \textbf{28.02$\pm$0.61} & \textbf{24.41$\pm$0.47} & \textbf{23.89$\pm$0.77} & \textbf{0.098$\pm$0.005} \\ \hline
\end{tabular}
    }
\label{tab:main01}
\end{table}

We conduct experiments on three common molecular graph property datasets:
\texttt{ogbg-molhiv} (1 binary classification task), 
\texttt{ogbg-molpcba} (128 binary classification tasks),
and \texttt{PCQM4Mv2} (1 regression task); 
all three come from the Open Graph Benchmark~\cite{ogb}.
We choose two types of backbones,
standard GCN \cite{gcn} and GIN \cite{gin}.
For the baseline experiments,
we use the default hyperparameters from \cite{ogb}.
As the linear transform added in SEM-based method gives the model more parameters,
we consider ``baseline+'' to make a fair comparison:
this model has an additional embedding layer, but no softmax operation.
Detailed information on these datasets, backbone models, 
and hyper-parameters is provided in \cref{sec:app_more_experiments_real}.

From Table~\ref{tab:main01},
we see the SEM-IL method almost always gives the best performance.
Unlike in the controlled vision experiments (\cref{fig:toy_results}), however,
SEM alone can bring significant improvements in this setting.
We speculate that compressibility pressure might be more significant in the interaction phase (i.e.\ standard training) when the generating mechanism is complex. 
This suggests it may be possible to develop a more efficient algorithm to better impose compressibility and expressivity pressures at the same time.

\subsection{Probing Learned z by Meaningful Structures}
\label{sec:exp_real_prob}

In the controlled vision examples,
we know that SEM-IL not only enhances the downstream performance,
but also provides $\vz$ more similar to the ground-truth generating factors,
as seen by the improvement in topological similarity.
However,
as the generating mechanism is usually inaccessible in real problems,
we indirectly measure the quality of $\vz$ using graph probing \cite{graph_probing}.
Specifically,
we first extract some meaningful substructures in a molecule using domain knowledge.
For example,
we can conclude whether a benzene ring exists in $\vx$ by directly observing its 2D structure.
With the help of the RDKit tool \cite{landrum2016rdkit},
we can generate a sequence of labels for each $\vx$,
which is usually known as the ``fingerprint'' of molecules 
(denoted $\mathsf{FP}(\vx)\in\{0,1\}^{k}$, indicating whether each specific structure exists in $\vx$).
Then, we add a linear head on top of the fixed $\vz$ and train it using a generated training set $(\vx,\mathsf{FP}(\vx)), \vx\sim\mathscr{D}_{train}$,
and compare the generalization performance on the generated test set $(\vx,\mathsf{FP}(\vx)), \vx\sim\mathscr{D}_{test}$.
For fair comparison,
we set $m=30$ and $v=10$ to make $\vz$ and $\vh$ be the same width,
excluding the influence of the linear head's capacity.

\begin{table}[h]
\centering
\caption{AUROC for graph probing based on different $\vz$; random guessing would be $\approx 0.5$.}
\resizebox{0.95\textwidth}{!}{
    \begin{tabular}{cccccccccccc|c}
    \hline
     &  & Sat.Ring & Aro.Ring & Aro.Cycle & Aniline & Ketone & Bicyc. & Methoxy & ParaHydrox. & Pyridine & Benzene & Avg. \\ \hline
    \multicolumn{1}{c|}{} & \multicolumn{1}{c|}{Init. base} & 0.870 & 0.958 & 0.811 & 0.629 & 0.595 & 0.615 & 0.627 & 0.706 & 0.692 & 0.812 & 0.732 \\
    \multicolumn{1}{c|}{} & \multicolumn{1}{c|}{Init. SEM} & 0.872 & 0.958 & 0.812 & 0.635 & 0.597 & 0.638 & 0.613 & 0.692 & 0.683 & 0.815 & 0.731 \\ \hline
    \multicolumn{1}{c|}{\multirow{3}{*}{\begin{tabular}[c]{@{}c@{}}Train\\ on\\ Molhiv\end{tabular}}} & \multicolumn{1}{c|}{Baseline} & 0.874 & 0.948 & 0.916 & 0.700 & 0.717 & 0.694 & 0.804 & 0.740 & 0.703 & 0.913 & 0.801 \\
    \multicolumn{1}{c|}{} & \multicolumn{1}{c|}{SEM-only} & 0.893 & \textbf{0.989} & 0.938 & 0.722 & 0.751 & 0.779 & 0.823 & 0.763 & 0.763 & 0.938 & 0.836 \\
    \multicolumn{1}{c|}{} & \multicolumn{1}{c|}{\textbf{SEM-IL}} & \textbf{0.907} & 0.980 & \textbf{0.967} & \textbf{0.781} & \textbf{0.801} & \textbf{0.794} & \textbf{0.903} & \textbf{0.815} & \textbf{0.869} & \textbf{0.965} & \textbf{0.878} \\ \hline
    \multicolumn{1}{c|}{\multirow{3}{*}{\begin{tabular}[c]{@{}c@{}}Train\\ on\\ Molpcba\end{tabular}}} & \multicolumn{1}{c|}{Baseline} & 0.921 & 0.988 & 0.968 & 0.866 & 0.875 & 0.835 & 0.875 & 0.855 & 0.856 & 0.968 & 0.901 \\
    \multicolumn{1}{c|}{} & \multicolumn{1}{c|}{SEM-only} & \textbf{0.942} & \textbf{0.991} & 0.981 & 0.888 & 0.916 & \textbf{0.854} & \textbf{0.921} & 0.888 & 0.897 & 0.980 & 0.926 \\
    \multicolumn{1}{c|}{} & \multicolumn{1}{c|}{SEM-IL} & 0.940 & 0.988 & \textbf{0.982} & \textbf{0.910} & \textbf{0.931} & 0.849 & 0.912 & \textbf{0.910} & \textbf{0.912} & \textbf{0.981} & \textbf{0.931} \\ \hline
    \multicolumn{1}{c|}{\multirow{3}{*}{\begin{tabular}[c]{@{}c@{}}Train\\ on\\ 10\% pcba\end{tabular}}} & \multicolumn{1}{c|}{Baseline} & 0.923 & 0.980 & 0.962 & 0.863 & 0.857 & 0.832 & 0.870 & 0.833 & 0.864 & 0.962 & 0.895 \\
    \multicolumn{1}{c|}{} & \multicolumn{1}{c|}{SEM-only} & \textbf{0.943} & 0.993 & \textbf{0.989} & 0.872 & 0.906 & 0.835 & 0.913 & \textbf{0.876} & 0.900 & \textbf{0.989} & 0.922 \\
    \multicolumn{1}{c|}{} & \multicolumn{1}{c|}{SEM-IL} & 0.938 & \textbf{0.994} & 0.985 & \textbf{0.891} & \textbf{0.918} & \textbf{0.847} & \textbf{0.927} & 0.874 & \textbf{0.907} & 0.985 & \textbf{0.927} \\ \hline
    \multicolumn{1}{c|}{\multirow{3}{*}{\begin{tabular}[c]{@{}c@{}}Train\\ on\\ pcba-1task\end{tabular}}} & \multicolumn{1}{c|}{Baseline} & 0.892 & 0.974 & 0.948 & 0.723 & 0.750 & 0.689 & 0.845 & 0.758 & 0.782 & 0.947 & 0.831 \\
    \multicolumn{1}{c|}{} & \multicolumn{1}{c|}{SEM-only} & 0.906 & \textbf{0.989} & 0.958 & \textbf{0.772} & 0.809 & 0.735 & 0.876 & \textbf{0.770} & 0.835 & 0.957 & 0.861 \\
    \multicolumn{1}{c|}{} & \multicolumn{1}{c|}{SEM-IL} & \textbf{0.906} & 0.988 & \textbf{0.963} & 0.741 & \textbf{0.851} & \textbf{0.744} & \textbf{0.887} & 0.765 & \textbf{0.869} & \textbf{0.962} & \textbf{0.867} \\ \hline
    \end{tabular}
    }
\label{tab:main02}
\end{table}

In the experiments,
we use the validation split of \texttt{molhiv} as $\mathscr{D}_{train}$ and the test split as $\mathscr{D}_{test}$,
each of which contain 4,113 distinct molecules unseen during the training of $\vz$.
The generalization performance of ten different substructures is reported in \cref{tab:main02}.
The first block (first two rows) of the table demonstrates the performance of two types of models before training.
They behave similarly across all tasks and give a higher AUROC than a random guess.
Then, comparing the three algorithms in each block,
we see SEM-based methods consistently outperform the baseline,
which supports our hypothesis well.
SEM-IL outperforms SEM-only on average,
but not for every task;
this may be because some structures are more important to the downstream task than others.

Comparing the results across the four blocks,
we find that the task in the interaction phase also influences the quality of $\vz$:
the $\vz$ trained by \texttt{molpcba} is much better than those trained by \texttt{molhiv}.
To figure out where this improvement comes from,
we first use only 10\% of the training samples in \texttt{molpcba} to make the training sizes similar,
then make the supervisory signal more similar by using only one task from \texttt{molpcba}.
As illustrated in the last two blocks in the table,
we can conclude that the complexity of the task in the interaction phase,
which introduces the expressivity pressure,
plays a more important role in finding better $\vz$.

Based on this observation,
we can improve SEM-IL by applying more complex interaction tasks.
For example,
existing works on iterated learning use a referential game or a reconstruction task in the interaction phase,
which could introduce stronger expressivity pressure from a different perspective.
Furthermore,
\cite{sem} demonstrates that SEM works well with most contrastive learning tasks.
We hope the fundamental analysis provided in this paper can shed light on why SEM and IL collaborate so well and also arouse more efficient and effective algorithms in the future.

\section{Related Works}
\label{sec:related_works}

\textbf{Iterated Learning and its Applications.}
Iterated learning (IL) is a procedure that simulates cultural language evolution to explain how the compositionality of human language emerges \citep{kirby2008cumulative}.
In IL, the knowledge (i.e., the mapping between the input sample and its representation) is transferred between different generations,
during which the compositional mappings gradually emerge and dominate under the interaction between compressibility and expressivity pressures.
Inspired by this principle,
there are some successful applications in symbolic games~\citep{nil}, visual question answering~\citep{vani2021iterated},
machine translation \citep{lu2020countering}, multi-label learning \citep{rajeswar2022multi}, reinforcement learning \citep{primacy_bias}, etc.

There are also many algorithms training a neural network for multiple generations,
which could possibly support the principles proposed in iterated learning.
For example, \cite{furlanello2018born} proposes to iteratively distill the downstream logits from the model in the previous generation,
and finally bootstrap all the models to achieve better performance on image classification task;
this can be considered as an IL algorithm merging the imitation and interaction phases together.
\cite{forgetting} proposes to re-initialize the latter layers of a network and re-train the model for multiple generations,
which is similar to an IL algorithm that only re-initializes the task head.
\cite{primacy_bias} extends such a reset-and-relearn training to reinforcement learning
and shows that resetting brings benefits that cannot be achieved by other regularization methods such as dropout or weight decay.
In the era of large language models,
self-refinement in-context learning \cite{madaan2023self} and self-training-based reinforcement learning \cite{gulcehre2023reinforced} can also benefit from iteratively learning from the signals generated by agents in the previous generation.
We left the discussion and analysis on these more complex real systems in our future work.

\textbf{Knowledge Distillation and Discrete Bottleneck.}
Broadly speaking, 
the imitation phase in SEM-IL,
which requires the student network to learn from the teacher,
can be considered as a knowledge distillation method \citep{hinton2015distilling}.
Different from the usual setting,
where the student learns from the teacher's prediction on a downstream task,
we assume a data-generating mechanism and create a simplex space for the generating factors.
By learning from the teacher in this space,
we believe the compressibility pressure is stronger and is more beneficial for the compositional generalization ability.

For the discretization,
there are also other possible approaches,
e.g., \cite{havrylov2017emergence} uses an LSTM to create a discrete message space, and
\cite{liu2021discrete} proposes a method using a vector quantized bottleneck \cite{van2017neural}.
We choose SEM \cite{sem} for its simplicity and universality:
it is easy to insert it into a model for different tasks. 
Besides, SEM has proved to be effective on self-supervised learning tasks; we extend it to classification, regression, and multi-label tasks.

\textbf{Compressibility, learning dynamics, and Kolmogorov complexity}
Recently, with the success of large language models,
the relationship between compressibility and generalization ability gradually attracted more attention \cite{deletang2023language}.
Authors of \cite{Jack_rae} propose that how well a model is compressed corresponds to the integral of the training loss curve when negative logarithmic likelihood loss is used. 
Although this claim assumes the model sees each training sample only once,
which might not be consistent with the multiple-epochs training discussed in this paper,
the principles behind this claim and our analysis are quite consistent:
the mappings generalize better and are usually learned faster by the model.
Furthermore, authors of \cite{illya_KC} link the generalization ability to Kolmogorov complexity.
Our analysis in \cref{sec:app_B} also supports this claim well.
Hence we believe the evolution of the human cognition system can provide valuable insights into deep learning systems.

\textbf{Graph Representation Learning.}
Chemistry and molecular modeling are some of the main drivers of neural graph representation learning since its emergence \cite{gilmer2017neural} and graph neural networks, in particular.
The first theoretical and practical advancements~\cite{gcn, graphsage, gin} in the GNN literature were mostly motivated by molecular use cases.
Furthermore, many standard graph benchmarks~\citep{ogb, dwivedi2022LRGB, dwivedi2020benchmarking} include molecular tasks on node, edge, and graph-levels, e.g., graph regression in \texttt{ZINC} and \texttt{PCQM4Mv2} or molecular property prediction in \texttt{ogbg-molhiv} and \texttt{ogbg-molpcba} datasets. 
Graph Transformers~\citep{dwivedi_gt,kreuzer2021_san,gps} exhibit significant gains over GNNs in molecular prediction tasks.
Self-supervised learning (SSL) on graphs is particularly prominent in the molecular domain highlighted by the works of GNN PreTrain~\citep{Hu*2020Strategies}, BGRL~\citep{bgrl}, and Noisy Nodes~\citep{noisy_nodes}.
We will extend the proposed method to different models and different pretraining strategies in our future work.

\section{Conclusion}
\label{sec:conclusion}
In this paper,
we first define the compositional generalization problem by assuming the samples in the training and test sets share the same generating mechanism while the generating factors of these two sets can have different distributions.
Then, by proposing the compositionality ladder,
we analyze the desired properties of the representations.
By linking the compositionality, compressibility, and Kolmogorov complexity together,
we find iterated learning,
which is well-studied in cognitive science,
is beneficial for our problem.
To appropriately apply iterated learning,
we attach an SEM layer to the backbone model to discretize the representations.
On the datasets where the true generating factors are accessible,
we show that the representations learned by SEM-IL can better portray the generation factors and hence lead to better test performance.
We then extend the proposed algorithm to molecular property prediction tasks and find it improves the generalization ability.

The main drawback of the current solution is the time-consuming training: 
we must run multiple generations and some common features might be re-learned multiple times,
which is inefficient.
Hence a more efficient way of imposing compressibility is desired.

Overall, though,
our analysis and experiments show the potential of the SEM-IL framework on compositional generalization problems.
We believe a better understanding of where the compressibility bias comes from in the context of deep learning can inspire more efficient and non-trivial IL framework designs.
Clearly defining the compositional generalization problem and finding more related practical applications can also promote the development of IL-related algorithms.

\printbibliography

\newwrite\pageout 
\immediate\openout\pageout=splitpage.txt\relax
\immediate\write\pageout{\thepage}
\immediate\closeout\pageout

\clearpage
\appendix
\section{The Ladder of Compositionality}
\label{sec:app_ladder_of_representation}

\begin{algorithm}[tb]
\caption{Proposed IL-SEM algorithm}
    \begin{algorithmic}
        \STATE Split the network into $h(\vx)$ and $g(\vz)$, then add an SEM bottleneck to discretize $\vz$
        \FOR{$\mathit{t}=0$ {\bfseries to} $T_\mathit{gen}$}
            \STATE Initialize the student speaker $h^S_t(\vx)$\footnotemark
            \STATE \textit{\# \qquad Imitation Phase (start from the second generation)}
            \IF{$t > 0$}
                    \FOR{$i=0$ {\bfseries to} $I_{imit}$}
                    \STATE Sample a batch $\vx$ from the training set $\mathcal{D}_{train}$
                    \STATE Sample the pseudo labels from teacher's prediction (one-hot vectors) $\vg_{1:m}\sim h_t^T(\vx)$
                    \STATE Calculate the student's prediction $\vz=[\hat{z}_1,...,\hat{z}_m]=h_t^S(\vx)$ 
                    \STATE Update $h_t^S(\cdot)$ with multi-label cross-entropy loss $\mathcal{L}_{ml}=\sum_{i=1}^m \vg_i^\top\cdot\log \hat{z}_i$ 
                \ENDFOR
            \ENDIF              
            \STATE \textit{\# \qquad Interaction Phase (regular training on the downstream task)}
            \FOR{$i=0$ {\bfseries to} $I_{int}$}
                \STATE Initialize the listener $g_t(\vz)$ randomly
                \STATE Sample a batch $(\vx,\vy)$ from the training set $\mathcal{D}_{train}$
                \STATE Calculate the downstream prediction $\hat{\vy}=(g_t\circ h^S_t)(\vx)$
                \STATE Update the parameters of $h^S_t$ and $g_t$ to minimize the downstream loss $\mathcal{L}_{ds}(\hat{\vy},\vy)$
            \ENDFOR
            \STATE The student becomes the teacher for the next generation: $h_{t+1}^T\leftarrow h_{t}^S$
        \ENDFOR
        \STATE Return the last (or the best) $(g_t\circ h_t^S)(\vx)$ for the downstream task
    \end{algorithmic}
    \label{alg:il}
\end{algorithm}
\footnotetext{In practice, we can choose to randomly initialize the speaker (usually when the model is small), copy the pretrained checkpoint (when the model is large), or copy the parameters of the teacher in previous generations (the seed iterated learning variant mentioned in \cite{lu2020countering}).}

\begin{table}[h]
    \caption{What information contained in $\vz$ on the ladder and their corresponding capabilities.
             $H(\cdot)$ is the entropy, $\leftrightarrow$ means bijection, $\Leftrightarrow$ means isomorphism bijection.}
    \vskip 0.1in
    \centering\begin{tabular}{cccccc}
    \hline
               & Infor. in $\vz$ & Train acc.   & ID-gen       & \begin{tabular}[c]{@{}c@{}}OOD-gen with\\ seen concepts\end{tabular} & Comp-gen      \\ \hline
    \rowcolor[HTML]{EFEFEF} 
    Stage I   & $H(\vG\mid\vz)>0$                                                                    & $\times$     & $\times$     & $\times$                                                             & $\times$     \\
    Stage II  & \begin{tabular}[c]{@{}c@{}}$H(\vG\mid\vz)=0$\\ $H(\vz\mid\vG)>0$\end{tabular} & $\checkmark$ & $\checkmark$ & $\times$                                                             & $\times$     \\
    \rowcolor[HTML]{EFEFEF} 
    Stage III & $\vz\leftrightarrow\vG$                                                                     & $\checkmark$ & $\checkmark$ & $\checkmark$                                                         & $\times$     \\
    Stage IV  & $\vz\Leftrightarrow\vG$                                                                     & $\checkmark$ & $\checkmark$ & $\checkmark$                                                         & $\checkmark$ \\ \hline
    \end{tabular}
    \label{tab:ladders}
\end{table}

To figure out what $\vz$ we need in order to generalize well compositionally,
we propose the ``ladder of compositionality'' in Figure~\ref{fig:datagen_ladder}.
To justify our claims, 
this appendix will discuss how we could climb the ladder step by step by analyzing how the corresponding requirements are generated.
A formal definition of the compositional mapping in terms of group theory,
which is necessary for reaching the final stage of the ladder,
is also provided.
In short,
we find only relying on the mutual information between $\vz$ and $\vG$ (or between $\vx$ and $\vy$) cannot reach the final rung of the ladder:
we need other inductive biases, which is the main motivation of this paper.

\subsection{Stage I: z misses some important information in G}

The learned representation would have $H(\vG\mid\vz)>0$ at this stage.
As $Y=\mathsf{GenY}(\vG,\epsilon)$ and $\hat{Y}=g(\vz)$ are assumed to be invertible (here $Y$ and $\hat{Y}$ are random variables),
this condition can be rewritten as $I(Y;\hat{Y})<H(Y)$
\footnote{Using the fact that $H(\vG\mid\vz)=H(Y\mid\hat{Y})=H(Y) - I(Y;\hat{Y})$.}.
Hence following the analysis in \cite{shwartz2017opening},
a model with such an encoder $\vz=h(\vx)$ even cannot achieve high enough training performance,
let alone generalizing to unseen test distributions.
This condition might occur when the model underfits the training data, 
e.g., at the beginning of training or the model's capacity is too small.
To make an improvement,
one can increase the model size or train longer.

\subsection{Stage II: z not only contains all information in G but also some in O}

At this stage,
the learned representation would have $H(\vG\mid\vz)=0$ and $H(\vz\mid\vG)>0$.
This means $h(\vx)$ remembers additional information that $\vG$ doesn't have,
e.g., noisy information in $\vO$.
From $H(\vG\mid\vz)=0$, we know $H(Y\mid\hat{Y})=0$ and hence $I(Y;\hat{Y})=H(Y)$.
Then the model would have perfect training performance and could also be able to generalize well when training and test datasets share the same distribution.
However, when facing the out-of-distribution generalization problem,
especially when a spurious correlation exists between some factors in $\vO$ and $\vG$,
the extra information learned by $\vz$ can mess the predictions up.
Such a phenomenon is named ``short-cut learning'' and is quite common in many deep-learning systems \cite{geirhos2020shortcut}.
For example, if the background strongly correlates with the object in the training set 
(e.g., a cow usually co-occurs with the grass while a seagull usually co-occurs with the beach),
the DNN then tends to rely more on these ``short-cut'' features (e.g., the background rather than the object) during training.
If such correlations disappear or reverse in the test set,
the models relying on factors in $\vO$ cannot generalize well to a new distribution.

\paragraph{Making improvement -- loss based on information-bottleneck}
To make an improvement,
the model should eliminate the task-irrelevant information as much as possible.
Based on this principle,
authors of \cite{tishby2000information} propose to minimize the following information bottleneck equations:
\begin{equation}
    \max \quad I(Z;Y)-\beta I(Z;X), \quad \beta>0,
\end{equation}
which means the learned $\vz$ should extract as much as information from $Y$ (or equivalently, $\vG$) and forget as much as irrelevant information about $X$ (i.e., those in $\vO$).
This method is also widely applied in other relevant tasks,
like domain adaptation \cite{li2022invariant},
invariant risk minimization \cite{ahuja2021invariance}, and etc.

\paragraph{Making improvement -- data augmentation}
Another simple and efficient way to make improvements is data augmentation:
one can identify some task-irrelevant factors in $\vO$ and design specific data augmentation methods to teach the model to be insensitive to them.
For example, if we believe that the label of an image should be irrelevant to color jittering, random cropping, rotation, flipping, etc.,
we can apply random augmentations during training and treat the differently augmented $\vx$ and $\vx'$ as the same class.
Then the model would inherently learn to be insensitive to such factors and hence forget the corresponding information.
One interesting thing about data augmentation is that it can be designed and applied in a reverse direction,
i.e., we can break some semantic factors and train the model to be insensitive to the broken samples.
For example, believing the shape of the image and order of the words are semantic factors in $\vG$,
the authors of \cite{puli2022nuisances} propose to randomly rotate the image patches or words to make negative samples.
Those models that perform well on such negative samples are more likely to rely on the factors in $\vO$.

\paragraph{Making improvement -- auxiliary task design, e.g., SSL}

Furthermore,
one can also consider designing auxiliary tasks in addition to the downstream task,
e.g., pretrain using self-supervised learning (SSL) and finetune on the target task.
In \cite{shi2022robust},
the authors empirically show that the representations learned via SSL usually generalize better than those learned via supervised learning when facing OOD downstream problems,
even though the models are trained using a similar amount of data samples.
There are also some works demonstrating that SSL representations encode more semantic information about the input image \cite{caron2021emerging},
which is a sign that auxiliary tasks like SSL can introduce extra biases that favor information in $\vG$.

Consider the first group of SSL methods,
which are usually based on contrastive loss,
e.g., SimCLR \cite{chen2020simple}, MoCo \cite{he2020momentum}, etc.
These methods usually require $h(\vx_i)$ and $h(\vx'_i)$ to be similar while $h(\vx_i)$ and $h(\vx_j), i\neq j$ to be distinct,
where $\vx_i$ is the anchor input,
$\vx_i'$ is the augmentation of it,
and $\vx_j$ is another different image.
The carefully designed augmentation can encourage the model to ignore some task-irrelevant factors that belong to $\vO$.
Imagine $\vx_i'$ is generated by deleting the background of $\vx_i$.
As the training enforces $d_z(h(\vx), h(\vx'))$ to be small,
the learned model will then become insensitive to the information in the background,
and hence avoid relying on this ``short-cut'' feature.
Note that the contrastive SSL is utilizing the bias from data augmentation in a more aggressive way:
the SSL algorithm will tell the model that $\vx$ and $\vx'$ are the \textit{same image} 
while the supervised learning only inform the model that $\vx$ and $\vx'$ belong to the \textit{same class}. 

Another line of SSL is built on reconstruction tasks,
like denoising auto-encoder (DAE \cite{vincent2008extracting}) and masked auto-encoder (MAE \cite{he2022masked}).
The $h(\vx)$ in these methods is usually trained in an auto-encoder fashion:
using a reconstruction network $r(\vz):\mathscr{Z}\rightarrow \mathscr{X}$,
we require the reconstructed $\vx_{recon}=(r\circ h)(\vx)$ to be similar to the original input $\vx$.
As the above equation has a trivial solution,
i.e., $(r\circ h)(\cdot)=identity$,
which means $h(\vx)$ copies every details about $\vx$ (including all $\vO$ and $\vG$),
some early works like denoising auto-encoder propose to add noise on $\vx$ to encourage non-trivial solutions.
Depending on the noise we introduce,
the model will learn to ignore different factors accordingly,
which seems quite similar to the data augmentation mentioned in contrastive SSL methods.
To encourage $h(\vx)$ to extract more useful semantic information in $\vG$,
methods like MAE \cite{he2022masked} propose to mask most of the patches of the input image and try to make reconstructions based on the remaining patches.
Such methods exhibit amazing reconstruction performance (not in terms of high resolution, but the precise semantic reconstruction),
which also implies that the $h(\vx)$ trained in this way is capable of extracting high-level semantic generating factors (those are likely in $\vG$).
Furthermore,
methods like BEiT \cite{bao2022beit} and iBOT \cite{zhou2022image} also patchify and mask the input images,
and concurrently, impose extra constraints on $\vz$ by comparing them with the descriptions generated by the big language model.
Such designs also encourages $h(\vx)$ to extract high-level semantic information,
as illustrated in \cite{caron2021emerging}.

In summary,
as the SSL algorithms learn good $\vz$ by designing loss or tasks on it,
we can introduce extra inductive bias via auxiliary tasks.
By requiring $\vz$ to be invariant when adding noise or conducting data augmentation on $\vx$,
the task-irrelevant information can be ruled out during learning.
By requiring $\vz$ contains the necessary information for reconstruction when only part of $\vx$ is observable,
the task-relevant semantic information can be highlighted during learning.
By combining these principles,
it is possible for us to learn good $h(\vx)$ that only extracts information in $\vG$.
With the help of these methods,
our $\vz$ might learn \textit{exactly} all information in $\vG$,
which means the third rung is achieved.

\subsection{Stage III: z learns exactly all information in G, i.e., \texorpdfstring{$h(\cdot)$}{h(.)} leads to a bijection}

Starting from Stage II, 
if we can design clever training methods (e.g., adding regularization, data augmentation, auxiliary task, etc.) and make our $h(\vx)$ to be insensitive to some $\vO$,
ideally,
we can learn an almost perfect encoder that extracts exactly all information contained in $\vG$.
In this case,
we have $H(\vG\mid\vz)=0$ and $H(\vz\mid\vG)=0$,
i.e., $h(\cdot)$ leads to a bijection between $\vz$-space and $\vG$-space,
which is denoted as $\vz\leftrightarrow\vG$.
Ideally,
such $\vz$ can generalize well even when $P_{train}\neq P_{test}$,
as long as all the concepts in the test set are seen by $h(\vx)$ during training,
i.e., $\mathsf{supp}[P_{test}(\vG)]\subseteq\mathsf{supp}[P_{train}(\vG)]$.
However, we speculate that even $h(\vx)$ on Stage III will struggle in the comp-gen problem,
as the problem assumes $\mathsf{supp}[P_{test}(\vG)]\cap\mathsf{supp}[P_{train}(\vG)]=\emptyset$.
In other words,
we need the model to decompose and recombine the learned concepts in a systematic way,
which is similar to the ground-truth-generating mechanism.
To achieve this goal, we need to consider how to achieve Stage IV.

\subsection{Stage IV: \texorpdfstring{$h(\cdot)$}{h(.)} leads to a isomorphism bijection between z and G}
\label{sec:app_ladder_of_representation_stage4}

To generalize well compositionally,
we not only need $\vz$ contains exactly all information in $\vG$,
the structure of $\vG$ should also be embodied in $\vz$,
which means $h(\cdot)$ should lead to an isomorphism bijection between $\vz$-space and $\vG$-space (i.e., $\vz\Leftrightarrow\vG$).
Specifically,
we need:

\textbf{\cref{prop:z_sysgen} in detail.\quad}
To generalize well compositionally, we need $\vz\Leftrightarrow\vG$,
which requires:
\begin{enumerate}\label{prop:z_sysgen_formal}
    \item $\vz\leftrightarrow\vG$, i.e., $h(\cdot)$ leads to a bijection between $\vz$ and $\vG$;
    \item $P(\vz)$ and $P(\vG)$ factorize in a similar way;
    \item Each $z_i$ maps to some $G_{\vw_i}$, where $\vw$ is a permutation vector of length $m$, and such a mapping is invariant\footnote{For example, $z_1$ always encode color and $z_2$ always encode shape.};
    \item For each $z_i$, the mapping between $z_i=k$ and $G_{\vw_i}=\vu_k$ is invariant\footnote{For example, $z_1=0$ always means blue and $z_1=1$ always means red.}, where $\vu$ is a permutation vector of length $v_{\vw_i}$.
\end{enumerate}

\begin{figure}[t]
    \begin{center}
    \centerline{\includegraphics[width=0.8\columnwidth,trim=0 0 0 0, clip]{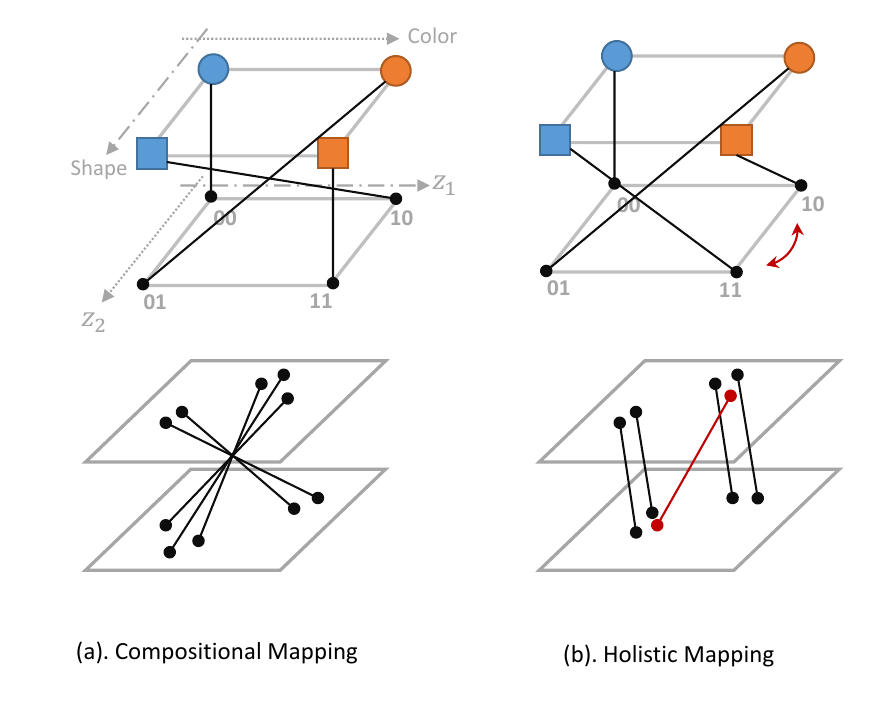}}
    \caption{A compositional mapping and a holistic mapping.
             Although holistic mapping in this example seems to have fewer ``crossings'', it has lower topological similarity.}
    \label{fig:sys_mapping}
    \end{center}
\vskip -0.2in
\end{figure}

\begin{table}[ht]
    \vskip 0in
    \caption{An example of coding the mappings, where $\alpha$ is how many characters 
             (including space and unique symbol, e.g., $\rightarrow$ and $:$) are used to express the grammar.}
    \centering\begin{tabular}{ccccccccl}
    \hline
    \multicolumn{4}{c}{\begin{tabular}[c]{@{}c@{}}A compositional mapping\\ 5 rules, $\alpha=43$\end{tabular}} &
       &
      \multicolumn{4}{c}{\begin{tabular}[c]{@{}c@{}}A holistic mapping\\ 4 rules, $\alpha=56$\end{tabular}} \\ \hline
    S   &   & $\rightarrow$ & z2, z1 &  &    &               &             &  \\
    z2: & 0 & $\rightarrow$ & blue   &  & S: & $\rightarrow$ & blue circle &  \\
    z2: & 1 & $\rightarrow$ & red    &  & S: & $\rightarrow$ & red circle  &  \\
    z1: & 0 & $\rightarrow$ & circle &  & S: & $\rightarrow$ & red box     &  \\
    z1: & 1 & $\rightarrow$ & box    &  & S: & $\rightarrow$ & blue box    &  \\ \cline{1-8}
    \end{tabular}
    \label{tab:app_coding_length}
\end{table}

We call such mappings \textbf{compositional} and call the other bijections \textbf{holistic}.
To better understand the difference between these two types of mappings,
we consider a simple example where the generating factors are $\vG=[G_1, G_2]$, $G_1=\{\text{blue, red}\}$ and $G_2=\{\text{circle, box}\}$,
and the representations are $\vz=[z_1,z_2]$, $z_1=\{0,1\}$ and $z_2=\{0,1\}$.
Hence the space of $\vG$ is $\{$ blue circle, blue box, red circle, red box $\}$,
and the space of $\vz$ is $\{00,01,10,11\}$.
To generate a compositional mapping,
we can first decide the meaning of $z_i$,
e.g., $z_1$ represents the shape and $z_2$ represents the color.
After that,
we assign vocabularies in $z_i$ to denote different meanings,
e.g., $z_1=0\rightarrow \text{circle}$, $z_1=1\rightarrow \text{box}$, $z_2=0\rightarrow \text{blue}$, and $z_2=1\rightarrow \text{red}$.
Combining these two steps,
we generate a compositional mapping, as illustrated in Figure~\ref{fig:sys_mapping}(a).
Any bijections that cannot be decomposed in this way are holistic mappings, like Figure~\ref{fig:sys_mapping}(b).
Obviously,
the compositional mappings can generalize compositionally while the holistic ones cannot.
However, finding a compositional mapping from all possible bijections is a hard problem,
as the number of compositional mappings is much smaller than the holistic ones:
assuming we have $m=3$ different $G_i$ and each with $v=3$ possible values,
there are roughly $10^{3}$ compositional mappings ($m!(v!)^m$),
and $10^{28}$ holistic mappings ($(v^m)!-m!(v!)^m$).

Another important characteristic of compositional mappings is the ``distance-preserving'' ability.
In other words,
they will map two $\vx$ with similar $\vG$ to two similar $\vz$ in the representation space,
i.e., $\vz$ preserve the topological structure of $\vG$.
We can use topological similarity ($\rho$, defined in Equation~\ref{eq:topsim}) to quantify how compositional a mapping is.
Usually,
for all bijections,
mappings with larger $\rho$ are more compositional.

To sum up the aforementioned four stages of the learned $\vz$, 
we list the conditions that $\vz$ must satisfy and their capabilities in Table~\ref{tab:ladders}.

\subsection{Relationship with disentangled representations}
\label{sec:app_ladder_of_representation_relation_to_disentangle}

Readers might notice that our problem settings and the requirements for $\vz$ mentioned in Hypothesis~\ref{prop:z_sysgen} are quite similar to those discussed in disentanglement representation learning \cite{higgins2018towards}.
Here we explain the connections and differences between them.

\paragraph{We care more about downstream tasks.}
In disentanglement representation learning,
people focus more on a general property of the learned representation by assuming the existence of independent semantic generating factors.
Disentanglement is then considered a desired property of such representations.
Works in this line try to formalize this property and believe this property is beneficial for multiple downstream tasks.
However, as the downstream task is not incorporated in such analysis,
it is hard to conclude whether specific factors are semantic or not (remember the split of $\vO$ and $\vG$ highly depends on the task).
On the other hand, this paper focuses more on the generalization ability of the learned representations on specific downstream task(s).
We directly optimize the loss of the downstream task in the interaction phase,
which we believe can regularize the representations more efficiently.

\paragraph{We don't strictly require a disentangled $\vz$.}
Although representations with properties like consistency or restrictiveness mentioned in \cite{Shu2020Weakly} could be beneficial for the downstream tasks,
it seems not necessary to have all of them.
In other words,
disentanglement is a sufficient but not necessary requirement for a model to generalize well in downstream tasks.
That is because the factorization of $\vG$ could be non-trivial (as nature is not simple).
We believe that capturing the hidden structure of $\vG$ and $\mathsf{GenY}(\cdot)$ using $\vz$ and $g(\cdot)$ is more important than mapping an involved generating mechanism to a disentangled system.
Additionally, implicitly splitting $\vO$ from $\vG$ is rather crucial in our settings,
which is rarely discussed in the fields of disentanglement representation learning.

\paragraph{We mainly consider discrete factors.}
Inspired by how human language evolves,
it is natural to start from the discrete factors and representation due to the discreteness of human language.
Our experimental results also benefit a lot from the discreteness,
e.g., using cross-entropy loss to amplify the learning speed advantage,
sampling pseudo labels to strengthen the inductive bias, 
using group theory to formalize the compressibility and Kolmogorov complexity, etc.
However, as nature might not be purely discrete,
incorporating the continuous latent space is crucial to enlarge the scope of our study.
We would leave this in our future work.

\section{Compositionality, Compressibility, Kolmogorov complexity, and number of active bases}
\label{sec:app_B}
This appendix links several key concepts related to compositional mappings together, i.e., compressibility, Kolmogorov complexity, and number of active bases.
The analysis here provides good intuition on why we might expect iterated learning to be helpful in comp-gen.

\subsection{Higher compositionality, lower Kolmogorov complexity}
\label{sec:app_comp_KC}
We first complete the proof of \cref{prop_1}.

\KC*

\begin{proof}
    Recall the fact that any bijection from $\vz$ to $\vG$ can be represented by an element in the symmetry group $S_{v^m}$.
    From the definition of the symmetry group,
    we know each element in $S_{v^m}$ can be represented by a permutation matrix of size $v^m$.
    As there is only one $1$ in each row and column of a permutation matrix,
    any permutation matrix can be uniquely represented by a permuted sequence of length $v^m$.
    Specifically, assume we have a sequence of natural numbers $\{1,2,...,v^m\}$,
    each permuted sequence $\mathsf{Perm}(\{1,2,...,v^m\})$ represents a distinct permutation matrix,
    and hence represents a distinct bijection from $\vz$ to $\vG$.
    In other words,
    we can encode one bijection from $\vz$ to $\vG$ using a sequence of length $v^m$, 
    i.e., $\mathsf{Perm}(\{1,2,...,v^m\})$,
    and bound the corresponding Kolmogorov complexity (in bits) as
    \begin{equation}
        \mathcal{K}(\text{bijection}) \le v^m\cdot \log_2 v^m=v^m\cdot m\cdot \log_2 v,
        \label{eq:app_KC_bijection}
    \end{equation}
    As an arbitrary bijection from $\vz$ to $\vG$ doesn't have any extra information to improve the coding efficiency,
    \cref{eq:app_KC_bijection} provides an upper bound of the minimal Kolmogorov complexity.

    On the contrary,
    as each compositional mapping can be represented by an element in $S_v^m\rtimes S_m$,
    we can encode the mappings more efficiently.
    Specifically, we need to first use $m$ sequences with length $v$,
    i.e., $\mathsf{Perm}(\{1,2,...,v\})$,
    to represent the assignment of ``words'' for each $z_i$.
    After that, we need one sequence of length $m$,
    i.e., $\mathsf{Perm}(\{1,2,...,m\})$ to encode the assignment between $z_i$ and $G_j$.
    The corresponding Kolmogorov complexity is then bounded as
    \begin{equation}
        \mathcal{K}(\text{comp}) \le v\cdot\log_2 v + m\cdot\log_2 m,
        \label{eq:app_KC_comp}
    \end{equation}    
    Although this is only an upper bound, by a counting argument \emph{most} such mappings must have a complexity no less than, say, a constant multiple of that bound.

    To compare the Kolmogorov complexity,
    we can define a ratio as $\gamma\triangleq \frac{\mathcal{K}(\text{bijection})}{\mathcal{K}(\text{comp})}$.
    Obviously, when $m\leq v$,
    $\gamma\geq\frac{v^{m-1}\cdot m}{2}$,
    which is larger than 1 as long as $m,v\geq 2$.
    When $m>v$,
    $\gamma\geq\frac{v^m\log_2 v}{2\log_2 m}$,
    which is also larger than 1 when $m,v\geq 2$.
\end{proof}

Actually, there might be some mappings that are not purely compositional or holistic.
For example, we can have a mapping with $z_{i\leq 10}$ sharing the reused rules while other $z_{i>10}$ doesn't.
Then this type of mapping can be represented by an element in $S_v^{10}\rtimes S_{10}\rtimes S_{v^{m-10}}$.
As a mapping in this subset shares 10 common rules,
its Kolmogorov complexity is between $\mathcal{K}(\text{bijection})$ and $\mathcal{K}(\text{comp})$.
Intuitively,
\emph{for all bijections},
smaller $\mathcal{K}(\cdot)$ means higher compressibility and higher compositionality.

\subsection{Regularize the Kolmogorov complexity using iterated learning}
\label{sec:app_KC_IL}
From the analysis in \cref{sec:app_ladder_of_representation_stage4} and \ref{sec:app_comp_KC},
we know that finding mappings with lower Kolmogorov complexity is the key to generalizing well compositionally.
From existing works in cognitive science,
we know iteratively introducing new agents to learn from old agents can impose the compressibility pressure and hence make the dominant mapping more compact after several generations \cite{kirby2015compression}.
Although iterated learning reliably prompts the emergence of compositional mapping in lab experiments, 
directly applying it to deep learning is not trivial:
as we are not sure whether the preference for compositionality still exists for the neural agents.
Hence in this subsection,
we study a simple overparameterized linear model on a 0/1 classification task to show that iterated learning can indeed introduce a non-trivial regularizing effect.
Combining with the fact that mappings with lower Kolmogorov complexity are more likely to capture the ground truth generating mechanism and hence generalize better \cite{illya_KC},
we can conclude that iterated learning is helpful for compositional generalization problems.

Consider a general supervised learning problem,
in which we want to learn a mapping $f\in\mathscr{F}:\mathscr{X}\rightarrow\mathscr{Y}$ that could approximate the underlying relationship between random variables $X$ and $Y$.
As we usually have a finite number of training samples and the space of all possible mappings is large,
the model could just remember all $(\vx,y)$ pairs in the training set to achieve a perfect training performance.
To avoid this trivial solution,
we usually expect the optimal $f^*$ to have specific properties,
e.g., smoothness or Lipschitz continuousness, etc.
Hence usually,
we want to optimize a problem with a corresponding regularization term:
\begin{equation}
    f^* \triangleq \text{arg} \min_{f\in\mathscr{F}} R(f) \quad \text{s.t.} \quad \frac{1}{N}\sum_n\|f(\vx_n) - y_n \|_2^2 \leq\epsilon,
    \label{eq:app_problem_def}
\end{equation}
where $R:\mathscr{F}\rightarrow\mathbb{R}$ is regularizing $f$ and $\epsilon$ is the training loss tolerance.
This regularization term is usually the inner product of $f$ on the functional space,
i.e., $R(f)=\|f\|_\mathscr{H}$,
where $\mathscr{H}$ is a reproducing kernel Hilbert space (RKHS) determined by some kernel function $\kappa(\cdot,\cdot)$.
For example,
if we consider $\forall (x,y)\in[0,1]^2$ and $\kappa(x,y)=\min(x,y)$,
then $\|f\|_\mathscr{H}=\|f'\|_{[0,1]^2}$ \cite{scholkopf2002learning}.
In other words,
the regularizer will penalize functions with higher first-order derivatives.

In order to make the analysis generalize to other properties of $f$,
we define a linear differential operator $L$ as $[Lf]\triangleq\int_\mathscr{X} u(\vx,\cdot) f(\vx) \diff \vx$,
where $u(\cdot,\cdot)$ is a kernel function.
Then, the regularization term is:
\begin{equation}
    R(f)=\|f\|_\mathscr{H} = \langle f, f \rangle_\mathscr{H} = \langle Lf, Lf \rangle_{\mathscr{X}^2} 
        = \int_\mathscr{X}\int_\mathscr{X} u(\vx,\vx^\dag) f(\vx) f(\vx^\dag) \diff \vx \diff \vx^\dag
    \label{eq:app_regterm_def}
\end{equation}
Substituting this definition back to Equation~\ref{eq:app_problem_def} and then applying the Karush-Kuhn-Tucker (KKT) conditions,
the closed-form solution for this optimization problem is (i.e., Proposition 1 in \cite{mobahi2020self}):
\begin{equation}
    f^*(\vx) = g_{\vx}^\top (cI+G)^{-1}\vY,
\end{equation}
where $c$ is a bounded constant, $\vY=[y_1|\dots | y_N]^\top$ is the stacked training labels.
The matrix $G\in\mathbb{R}^{N\times N}$ and its vector $g_{\vx}\in\mathbb{R}^{N\times 1}$ is defined as:
\begin{equation}
    G[j,k]\triangleq\frac{1}{N} g(\vx_j, \vx_k); \quad\quad g_{\vx}[k]\triangleq\frac{1}{N} g(\vx,\vx_k).
\end{equation}
The $g(\vx,\vq)$ is the Green's function \cite{salsa2016partial} of this operator and is defined by:
\begin{equation}
    \int_\mathscr{X} u(\vx, \vx^\dag) g(\vx, \vq) \diff \vx^\dag = \delta(\vx-\vq),
\end{equation}
where $\delta$ is the Dirac delta function.
Following the definition of Green's function,
we know $G$ is positive definite and hence decompose it as:
\begin{equation}
    G = V^\top D V,
\end{equation}
where $D=\mathsf{diag}([d_1,\dots,d_N])$ is determined by its eigenvalues and $V$ contains $N$ corresponding eigenvectors.
Now, we can stack the model's prediction for different input samples $\vx_n$ and get the vector form solution of problem \ref{eq:app_problem_def}:
\begin{equation}
    \vf^* \triangleq [f^*(\vx_1) | \cdots | f^*(\vx_k)]^\top = G^\top (cI+G)^{-1}\vY = V^\top D (cI+G)^{-1}V\vY
    \label{eq:app_solution_of_f}
\end{equation}

With this solution,
following the settings in \cite{mobahi2020self},
we can explain where the compressibility pressure (the one that favors mappings with lower Kolmogorov complexity) comes from.
Specifically,
the optimal model for the first generation is $\vf_0^*=V^\top D (c_0 I+G)^{-1}V\vY_0$.
Then in the following generations,
the model in generation $t$ will learn from the predictions of the model in the previous generation.
As the problem is identical (the only difference is the labels) for different generalizations,
we can have the following recursion formulas:
\begin{equation}
    \vf^*_t = V^\top D (c_tI+G)^{-1}V\vY_t \quad \text{and} \quad \vY_t=\vf^*_{t-1}.
\end{equation}
Solving this yields the expression of the labels in the $t$-th generation:
\begin{equation}
    \vY_t=V^\top A_{t-1} V \vY_{t-1} = V^\top \left(\prod_{i=0}^{t-1} A_i \right)V \vY_0,
\end{equation}
where $A_t\triangleq D(c_t I + D)^{-1}$ is a $N\times N$ diagonal matrix.
Substituting this back to Equation~\ref{eq:app_solution_of_f},
we finally obtain the following expression:
\begin{align}
    f^*_t(\vx)  &= g_{\vx}^\top V^\top D^{-1}\left(\prod_{i=0}^{t} A_i \right) V \vY_0 \\\nonumber
    \vf^*_t     &= G V^\top D^{-1} \left(\prod_{i=0}^{t} A_i \right) V \vY_0\\
                &= V^\top\left(\prod_{i=0}^{t} A_i \right) V \vY_0.
                \label{eq:app_solution_of_f_vector}
\end{align}

From this solution,
the model's prediction at $t$-th generation can be considered as a weighted combination of transformed $\vY_0$.
The matrix $V$ will first map $\vY_0$ to a space determined by the Green's function.
Different dimensions of this space are then rescaled by a diagonal matrix $\prod_{i=0}^{t} A_i$.
After that, the vector is transformed back to the origin space by multiplying $V^\top$.
Among these terms, $\prod_{i=0}^{t} A_i$ is the only one that depends on $t$.
Recall the definition of $A_t=D(c_t I+D)^{-1}$,
we can conclude that $\prod_{i=0}^{t} A_i$ is also a diagonal matrix where each entry has the form like $\prod_t\frac{d_j}{c_t+d_j}$ 
(here $d_j$ is the $j$-th eigenvalue of $G$).
As $c_t>0,\forall t$, as stated in \cite{mobahi2020self},
all the diagonal entries will gradually decrease when $t$ grows.
The dimensions with smaller $\prod_t\frac{d_j}{c_t+d_j}$ decrease faster, and vice versa.
Recall the role played by this diagonal matrix,
we can imagine that the number of active bases in $V$ is decreasing when $t$ grows,
which is a strong and unique inductive bias (i.e., compressibility pressure) introduced by this recursive training fashion\footnote{The authors in \cite{mobahi2020self} also prove that such a regularization cannot be achieved by other forms of regularizations.}.

Now, we can link these theoretical analyses to the Kolmogorov complexity and compressibility pressure.
The crux is the understanding of ``active bases''.
Consider a toy example where $\vG=[G_1, G_2]$, where each $G_i$ has 4 possible values (there are $N=16$ different objects).
Then, to memorize these 16 samples,
the model needs 16 bases like ``S 00 $\rightarrow$ blue circle''.
However, for compositional mappings, only 9 bases are enough\footnote{Specifically, we need one basis like ``S $\rightarrow$ z1 z2'', four bases like ``z0 i $\rightarrow$ some color'', and four bases like ``z1 j $\rightarrow$ some shape''. Please refer to Table~\ref{tab:app_coding_length}.}: because the model \emph{reuse} some rules.

In summary, when $t$ is small,
there is no preference for compositional mappings because the number of active bases is large enough to remember most of the training samples.
As $t$ increases,
the model then needs to be clever enough to reuse some bases,
where the structure of the mapping emerges.
If $t$ is too large,
where the compressibility pressure is too strong,
the model will degenerate into a very naive solution,
which is harmful for generalization (hence we need the interaction phase in iterated learning, discussed later).

\section{Experiments on Controlled Vision dataset}
\label{sec:app_more_experiments_toy}

\subsection{Experimental Settings}
\label{sec:app_more_experiments_toy_setting}

\begin{figure}[ht]
    \begin{center}
    \centerline{\includegraphics[width=\columnwidth,trim=0 0 0 0, clip]{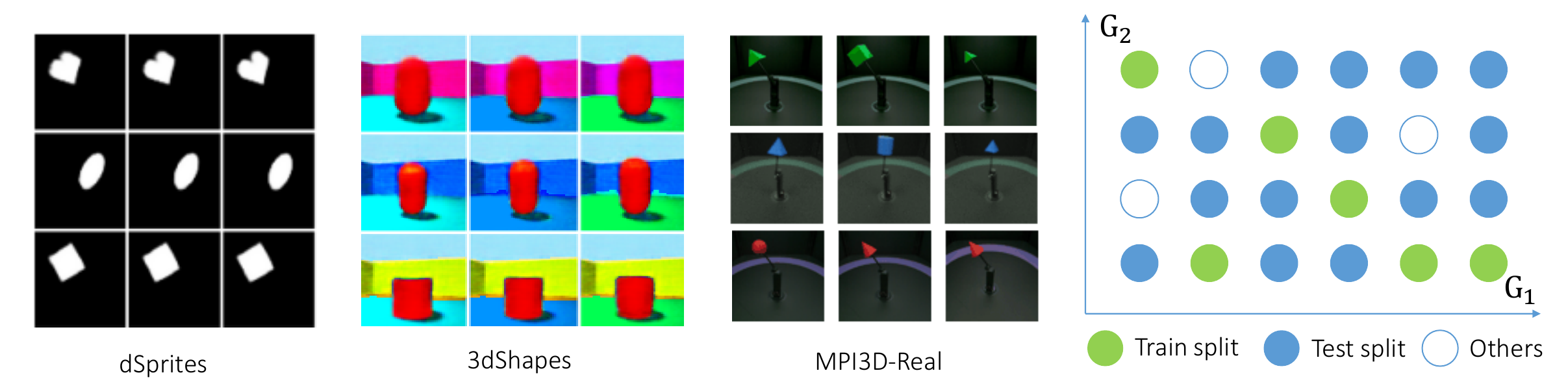}}
    \caption{The toy vision datasets and a train/test split example. All images have size 64*64.
             Pixels in dSprite are binary values while those in 3dShape and MPI3D contain 3 channels.}
    \label{fig:dataset}
    \end{center}
\vskip -0.3in
\end{figure}

\begin{table}[ht]
\centering
\caption{Vision datasets considered in this paper.
        The numbers in the parentheses represent how many different values of the attribute.
        The last column means how many different samples we select for each $\vG$. 
        Hence the number of samples in both training and test sets of these three datasets would be 9000, 8000, and 7200.}
         \vskip 0.1in
\resizebox{1\textwidth}{!}{
\begin{tabular}{cccccccc}
\hline
         & $G_1$            & $G_2$            & $G_3$                                                                  & $G_4$                                                                & $|\mathcal{G}|$ & $\vO$                                                                        & \# per $\vG$ \\ \hline
\rowcolor[HTML]{EFEFEF} 
dSprites & shape (3)        & scale (6)        & pos-x (10 out of 32)                                                   & pos-y (10 out of 32)                                                 & 1600            & orientation (40)                                                             & 5            \\
3dShape  & floor hue (10)   & wall hue (10)    & object hue (10)                                                        & object scale (8)                                                     & 8000            & \begin{tabular}[c]{@{}c@{}}object shape (4)\\ orientation (15)\end{tabular}  & 1            \\
\rowcolor[HTML]{EFEFEF} 
MPI3D    & object color (6) & object shape (6) & \begin{tabular}[c]{@{}c@{}}horizontal x \\ (10 out of 40)\end{tabular} & \begin{tabular}[c]{@{}c@{}}vertical y\\  (10 out of 40)\end{tabular} & 3600            & \begin{tabular}[c]{@{}c@{}}size (2) camera (3)\\ background (3)\end{tabular} & 2            \\ \hline
\end{tabular}
    }
\label{tab:app_vision_datasetting}
\end{table}

\paragraph{Data generating factors}
In this paper,
we conduct experiments on three vision datasets,
i.e., dSprites~\cite{dsprites17}, 3dShapes~\cite{3dshapes18}, MPI3D-real~\cite{mpi3d},
where the ground truth $\vG$ are given.
The summary and examples of these datasets are provided in Table~\ref{tab:app_vision_datasetting} and Figure~\ref{fig:dataset}.

Here we specify how to split the dataset and generate the downstream labels using 3dShapes as an example.
We denote the hue of the floor, wall, and object as $G_1, G_2,$ and $G_3$, respectively;
each has 10 possible values,
linearly spaced in $[0,1]$.
The object scale, $G_4$,
has 8 possible values
linearly spaced in $[0,1]$.
The remaining two factors,
object shape (4 possible values) and object orientation (15 possible values),
are treated as \textit{other factors} and merged into $\vO$.
Data augmentation methods such as
adding Gaussian noise, random flipping, and so on,
are also merged into $\vO$.
Under this setting,
the universe of $\vG$, i.e., $\mathcal{G}$,
has 8000 different values,
which is further divided into $\mathcal{G}_{train}$ and $\mathcal{G}_{test}$.
For the sys-gen problem,
we assume $\mathcal{G}_{train}\cap\mathcal{G}_{test}=\emptyset$.
One measure of the difficulty of the problem is the split ratio, $\alpha=|\mathcal{G}_{train}|/|\mathcal{G}|$;
smaller $\alpha$ generally means a more challenging problem.

\paragraph{Data generating mechanisms}
For the training set,
we first select $\vG\in\mathcal{G}_{train}$,
and then generate multiple
input signals using $\vx=\mathsf{GenX}(\vG,\vO)$, 
where $\vO$ is uniformly random.
The $\vx$ in the test set is generated in a similar way,
but without including data augmentation in $\vO$.
Labels for all pairs in the dataset are generated by $\vy=\mathsf{GenY}(\vG,\epsilon)$.
The main downstream task we study here is regression:
$\mathsf{GenY}(\vG,\epsilon)=\va^\top\vG+\epsilon$,
where $\va=[a_1,...,a_m]^\top$ is a column vector and all $a_i$ are chosen from $[0,1]$\footnote{We also tried a simple non-linear mapping from $\vG$ to $y$,
i.e., $\mathsf{GenY}(\vG,\epsilon)=a_1\cdot G_1  + a_2\cdot G_2 + a_3 \cdot G_4 G_3+\epsilon$,
and a multi-task scenario, where $\va\in\mathbb{R}^{|y|\times m}$ is a matrix.
The resulting trends in these settings are quite similar.}.
These examples assume $\vy$ is generated by a simple combination of different $G_i$,
where recovering the generating factors is necessary to generalize well compositionally.

\paragraph{Model and training settings}
The model structure for this section is illustrated in Figure~\ref{fig:sem}.
We consider using a randomly initialized 4-layer CNN for dSprites and ReNet18 \citep{resnet} for 3dShapes and MPI3D.
Unless otherwise specified,
we consider a linear head $g(\vz)$,
and a typical $\mathcal{L}_{ds}$,
i.e., cross-entropy loss for classification and mean square error loss for regression.
The networks are optimized using a standard SGD optimizer with a learning rate of $10^{-3}$ and a weight decay rate of $5*10^{-4}$.
Actually, we find the results are insensitive to these settings.

\subsection{More Results}
\label{sec:app_more_experiments_toy_results}

\subsubsection{Some interesting observations}

\begin{figure}[ht]
     \centering
     \begin{subfigure}{0.49\textwidth}
         \centering
         \includegraphics[width=\columnwidth, trim=0 0 0 0, clip]{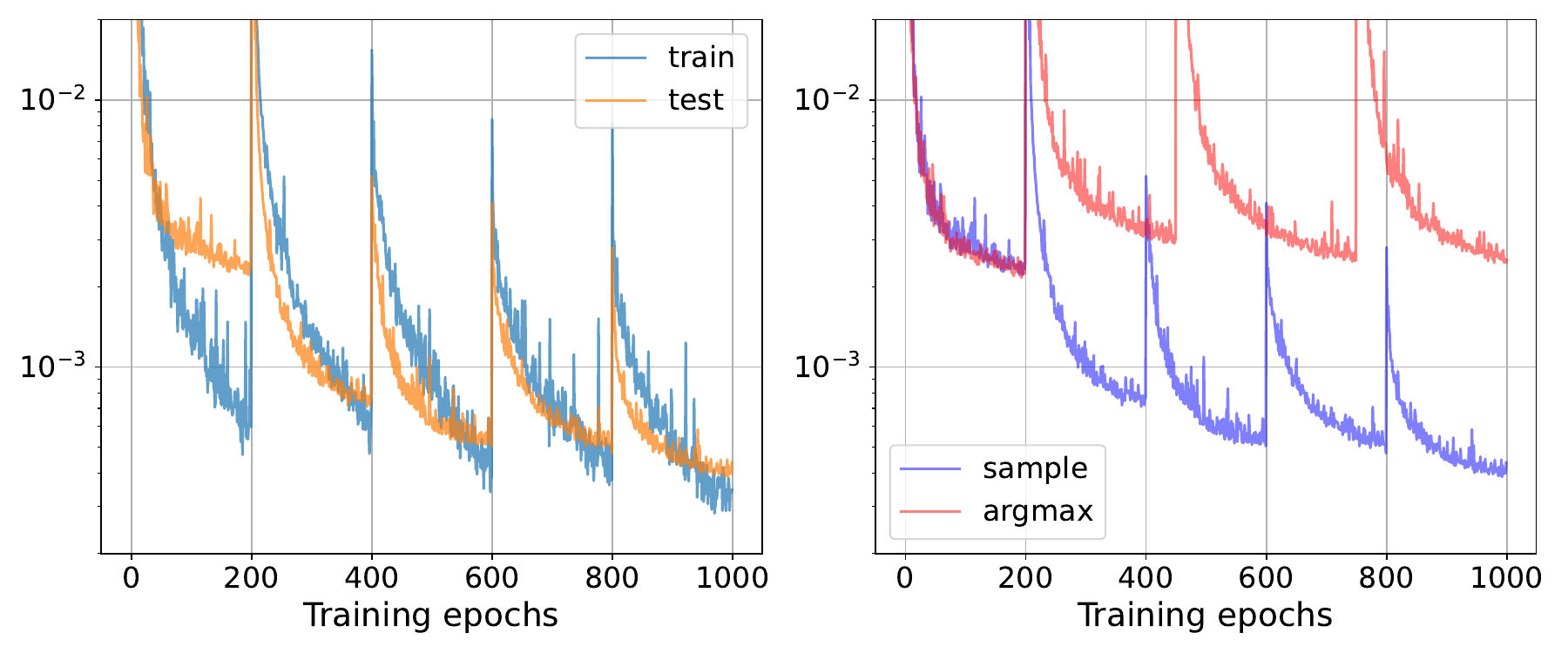}
     \end{subfigure}
     \hfill
     \begin{subfigure}{0.49\columnwidth}
         \centering
         \includegraphics[width=\columnwidth, trim=45 0 45 0, clip] {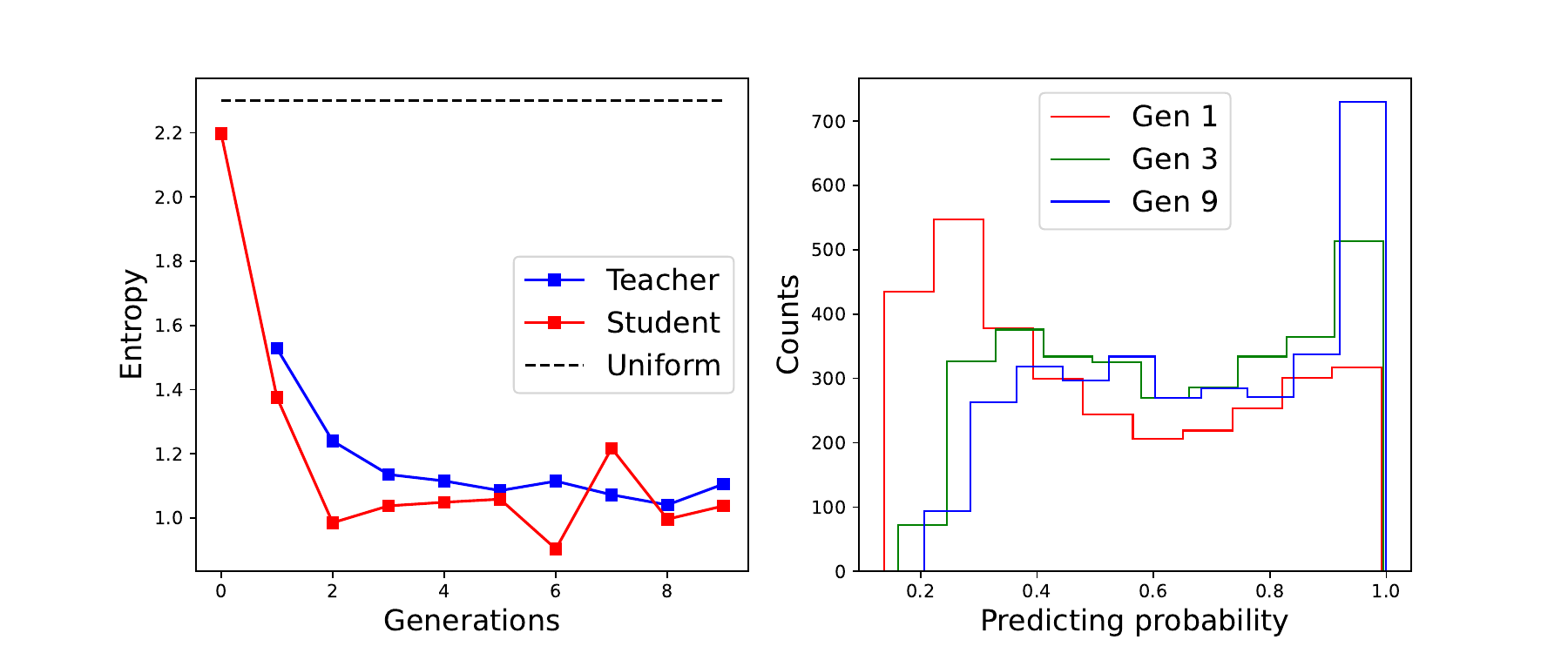}
     \end{subfigure}
        \caption{Left to right: 1.) zoom in on the curves of training and test loss;
                2.) demonstration of what will happen to test loss when replacing the sampling by argmax in the imitation phase;
                3.) how the average predicting entropy, i.e., $\mathbb{E}[H(\bar{z}_l)]$, changes;
                4.) histograms of predicting probabilities in different generations.
                All panels come from experiments of SEM-IL on 3dShapes.}
        \label{fig:app_sample_entropy}
\end{figure}

See the first panel in Figure~\ref{fig:app_sample_entropy},
which demonstrates the training and testing loss when training a model using SEM-IL.
In the first generation,
we see the training loss is always smaller than the test one.
The test loss then plateaus after some epochs, 
which matches our expectations.
However, in the following generations,
the test loss would decrease faster than the training one at the beginning of the interaction phase,
which is quite counter-intuitive.
We would explore why this happens and whether it is a sign of increased topological similarity in the future.

Another observation is about the sampling mechanism applied in the imitation phase.
Remember in Algorithm~\ref{alg:il},
the pseudo labels used in the imitation phase are sampled from the teacher's prediction $\bar{z}_l$.
Hence if the teacher is confident in some attributes,
the generated labels would be consistent in different epochs, and vice versa.
From Figure~\ref{fig:toy_learning_path},
we provide the scatter plot of the correlation between the teacher's confidence and the student's learning speed.
Here we verify this hypothesis through an ablation study.
Specifically,
we replace the sampling procedure with an $\mathsf{argmax}$ function,
i.e., the teacher always provides the label with the largest predicting probability regardless of its confidence.
As illustrated in the second panel in Figure~\ref{fig:app_sample_entropy},
the test performance of the $\mathsf{argmax}$-case is much worse than the standard SEM-IL method.

The benefits introduced by sampling pseudo labels can also be interpreted as we are making self-adapting $\tau$ for different input samples in SEM.
In the origin SEM,
we only have one $\tau$ to control the average entropy of the backbone's prediction (lower $\tau$ leads to peakier predicting distributions, and vice versa).
However, a model trained using SEM-IL equivalently has different $\tau$ for different $\vx$.
To understand this,
we can compare the entropy of $h_1(\vx)$ and $h_1(\vx')$,
where the teacher is confident in $(\vx, \vg_1=0)$ and less confident in $(\vx', \vg_1=0)$.
Then during imitation,
the student will remember $(\vx, \vg_1=0)$ much faster than $(\vx', \vg_1=0)$ and hence assign $\bar{z}_1=0$ higher probability when the input is $\vx$.
On the contrary,
for the input $\vx'$,
as the student might receive different corresponding pseudo labels during imitation,
it would assign a lower probability for $\bar{z}_1=0$ given $\vx'$.
As a result,
the entropy $H(\bar{z}_1 \mid \vx)$ and $H(\bar{z}_1 \mid \vx')$ of the student model would be very different as the teacher have different confidence when generating the pseudo labels,
which is equivalent as automatically selecting different $\tau$ for $\vx$ and $\vx'$.

The last two panels in Figure~\ref{fig:app_sample_entropy} demonstrate how the entropy of the model's prediction on $\vG$ changes in different generations.
(Remember the output of the backbone after SEM, i.e., $\vz=[\bar{z}_1, \dots, \bar{z}_m]$, are $m$ simplicial vectors with length $v$.)
From the figures,
we see the entropy gradually decreases as the training goes on,
which means the model is becoming more and more confident in its predictions on average.
However, the last panel shows that there are still many unconfident predictions even after converging on the downstream task performance:
we speculate that these factors contain little information on $\vG$ as they might have high entropy.

\subsubsection{Influence of task difficulty}

As the ground-truth generating factors are accessible for these vision datasets,
we could explore how the difficulty of the task, i.e., $\alpha$,
influence the performance gap among different methods.
As shown in Table~\ref{tab:app_alpha_3dshapes},
SEM-IL brings a significant enhancement when $\alpha$ is not too big nor too small.
When $\alpha$ is big,
the test loss might be very small and there is no room to make considerable improvement.
On another extreme,
if $\alpha$ is too small,
some attributes might be never observed during training,
which is hard for the model to extract correct generating factors.
Remember we expect to generalize to ``red circle'' by knowing the concept of red and circle from other combinations:
if there is only ``blue'' and ``box'' in the training set,
it is impractical for a model to learn such a concept (maybe the model can extrapolate, but that is another topic).
We speculate most real tasks are in a relatively small $\alpha$ regime,
as the generating factors and their possible values can be very large.
Please note that as the train/test split and $\mathsf{GenY}$ all depend on the random seeds,
the variance of the numbers in these tables could be large.
However, we observe the SEM-IL consistently outperforms other methods under each task generated by different random seeds.

\begin{table}[ht]
\centering
\caption{Relative improvement comparison for different $\alpha$ on three datasets.
         We report the average and standard error of 4 different runs.
         $\Delta_1$ is calculated by $\frac{\text{SEMIL-Baseline}}{\text{Baseline}}$,
         while $\Delta_2$ is calculated by $\frac{\text{SEMIL-Baseline}}{\text{SEMIL}}$.
         The test MSE for different settings are the numbers multiplying $10^{-3}$.
         Note that the numbers of different datasets are not comparable,
         as the model structure and generating factors of them are different.
         When $\alpha$ is too small, the model fails to converge on MPI3D, 
         which means MPI3D might be a more challenging dataset.
         }
         \vskip 0.1in
\resizebox{\textwidth}{!}{
\begin{tabular}{ccccccc}
\hline
                          & $\alpha$            & 0.8                      & 0.5                      & 0.2                      & 0.1                      & 0.02                     \\ \hline
\multirow{6}{*}{3dShapes} & Baseline            & 3.778$\pm$0.792          & 7.902$\pm$2.000          & 28.01$\pm$11.75          & 57.87$\pm$9.852          & 355.5$\pm$136.0          \\
                          & NIL-only            & 3.866$\pm$0.733          & 7.536$\pm$1.966          & 33.18$\pm$15.16          & 56.46$\pm$12.54          & 330.5$\pm$183.1          \\
                          & SEM-only            & 2.531$\pm$0.742          & 5.15$\pm$0.415           & 21.41$\pm$5.274          & 55.48$\pm$15.76          & 292.7$\pm$148.0          \\
                          & SEM-IL              & \textbf{0.633$\pm$0.117} & \textbf{1.27$\pm$0.112}  & \textbf{5.165$\pm$0.697} & \textbf{17.52$\pm$3.103} & \textbf{221.0$\pm$122.5} \\ \cline{2-7} 
                          & Relative $\Delta_1$ & 0.8324                   & \textbf{0.8396}          & 0.8156                   & 0.6972                   & 0.3783                   \\
                          & Relative $\Delta_2$ & 4.968                    & \textbf{5.236}           & 4.423                    & 2.303                    & 0.6086                   \\ \hline
\multirow{6}{*}{MPI3D}    & Baseline            & 45.42$\pm$10.97          & 61.95$\pm$17.80          & 125.9$\pm$29.30          & 234.0$\pm$34.94          & Not Converge             \\
                          & NIL-only            & 43.38$\pm$14.03          & 57.34$\pm$17.35          & 110.8$\pm$36.06          & 203.3$\pm$73.62          & Not Converge             \\
                          & SEM-only            & 42.91$\pm$10.05          & 57.69$\pm$18.34          & 116.5$\pm$38.50          & 204.1$\pm$72.68          & Not Converge             \\
                          & SEM-IL              & \textbf{31.20$\pm$8.053} & \textbf{40.33$\pm$12.12} & \textbf{73.43$\pm$22.63} & \textbf{137.8$\pm$67.36} & Not Converge             \\ \cline{2-7} 
                          & Relative $\Delta_1$ & 0.313                    & 0.349                    & \textbf{0.417}           & 0.411                    & -                        \\
                          & Relative $\Delta_2$ & 0.456                    & 0.536                    & \textbf{0.714}           & 0.698                    & -                        \\ \hline
\multirow{6}{*}{dSprites} & Baseline            & 0.172$\pm$0.145          & 7.906$\pm$2.309          & 109.2$\pm$10.28          & 313.5$\pm$28.47          & 839.7$\pm$73.85          \\
                          & NIL-only            & 0.136$\pm$0.123          & 3.678$\pm$0.869          & 56.42$\pm$7.169          & 241.0$\pm$24.68          & 630.2$\pm$32.53          \\
                          & SEM-only            & 0.126$\pm$0.119          & 7.667$\pm$1.937          & 108.8$\pm$8.563          & 315.2$\pm$23.77          & 658.0$\pm$82.52          \\
                          & SEM-IL              & \textbf{0.085$\pm$0.042} & \textbf{2.487$\pm$0.874} & \textbf{40.11$\pm$9.726} & \textbf{213.1$\pm$38.11} & \textbf{596.8$\pm$41.94} \\ \cline{2-7} 
                          & Relative $\Delta_1$ & 0.506                    & \textbf{0.685}           & 0.633                    & 0.320                    & 0.289                    \\
                          & Relative $\Delta_2$ & 1.023                    & \textbf{2.179}           & 1.722                    & 0.471                    & 0.407                    \\ \hline
\end{tabular}
    }
\label{tab:app_alpha_3dshapes}
\end{table}

\subsection{Other baselines from disentangled representation learning}
\label{app:vision:disentanglement}

Most related works consider sys-gen as an NLP or emergent language problem rather than a general representation learning problem,
so to the best of our knowledge, there are no specific advanced baselines for this concrete problem.
The most related works are some VAE-based methods in disentanglement learning.
In this part,
we re-implement $\beta$-VAE and compare them with the baseline method in our setting.
Specifically,
we first pre-train the encoder of VAE on the same training set
and then attach a task head for the downstream task to the ``$\mu$-part'' of the encoder's prediction 
(note that the encoder will output ``$\mu$-part'' and ``$\sigma$-part'' together).
We observe that the VAE-based method performs worse than the baseline method,
even though they seem to recover some disentangled factors when conducting latent traversal.
This observation is consistent with the findings in \cite{schott2022visual} and \cite{xu2022compositional},
where the authors claim that the disentangled representations are incapable of reliably generalizing to new conceptual combinations.
We also speculate that the challenging requirement of the comp-gen problem,
i.e.,  $\mathcal{G}_{train}\cap\mathcal{G}_{test}=\emptyset$,
exacerbates this:
there will not be enough variations in $\vx$ to make the VAE model capture the latent vectors precisely.
However, as VAE is also an encoder-decoder system,
it is possible to combine SEM-IL with it,
which is left for our future work.

\section{Experimental Settings and More Results on Molecular Graph Dataset}
\label{sec:app_more_experiments_real}

\begin{figure}[h]
    \begin{center}
    \centerline{\includegraphics[width=\columnwidth,trim=0 0 0 0, clip]{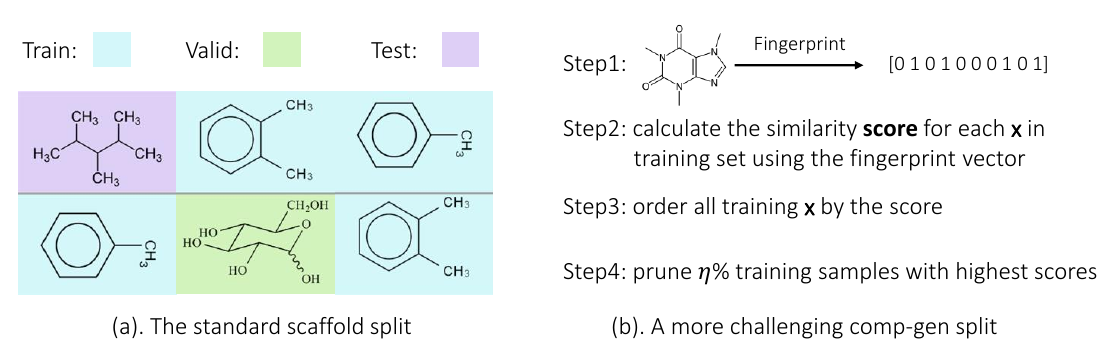}}
    \caption{Left: an example of the scaffold split, the figures are copied from \cite{wu2018moleculenet}. 
             Right: the procedure of a more challenging few-shot split used in Table~\ref{tab:main01}.}
    \label{fig:app_scaffold}
    \end{center}
\vskip -0.3in
\end{figure}

In this part,
we provide an overview of the molecular graph learning dataset we used in this paper.
\texttt{Ogbg-molhiv} and \texttt{ogbg-molpcba} \cite{ogb} are molecular property prediction datasets proposed by MoleculeNet and then adopted by open graph benchmark (OGB) project \cite{wu2018moleculenet}.
The \texttt{molhiv} dataset contains roughly 40K samples,
and the task is to predict whether a molecule is capable of inhibiting HIV replication (i.e., a binary classification task).
The \texttt{molpcba} dataset is more complex,
as it contains roughly 400K samples,
and the target is to predict 128 different bioassays,
which is a multi-task binary classification task.
\texttt{Ogb-PCQM4Mv2} \cite{pcqm} is a large-scale molecular dataset that contains roughly 4000K samples.
The task is to predict the HOMO-LUMO gap (i.e., a regression task),
which is a quantum physical property that is hard to calculate in traditional methods.
As the test split is private,
we treat the original validation split as the test split and only report the performance on it in the paper.
All of the aforementioned datasets use scaffold splitting,
which separates structurally different molecules into different subsets,
as illustrated in the left panel in Figure~\ref{fig:app_scaffold}.
Under such a split,
some specific structures in the test set might never occur during training,
which makes it a good testbed for systematic generalization ability.

To make the task more challenging,
which could simulate the scenario where $\mathsf{supp}[P_{train}(\vG)]\cap\mathsf{supp}[P_{test}(\vG)]=\emptyset$,
we prune the training set following a procedure demonstrated in the right panel of Figure~\ref{fig:app_scaffold}.
Specifically,
we first calculate the fingerprint of each $\vx$ in both training, validation, and test sets using RDKit (please refer to Section~\ref{sec:exp_real_prob} for more details).
Similar to the settings used in Table~\ref{tab:main02},
the fingerprint of each $\vx$ is defined as $\mathsf{FP}(\vx)\in\{0,1\}^{k}$.
In this vector,
$\mathsf{FP}_i(\vx)=1$ means the molecule contain the $i$-th structure.
Then, the score for each $\vx$ in the training set is defined as how many samples in the validation and test sets share the identical $\mathsf{FP}(\vx)$ with it.
To prune the training samples which are similar to the validation and test set,
we delete $\eta\%$ samples with the highest scores (for Table~\ref{tab:main01}, $\eta=50$).
If we believe these 10 structures are part of $\vG$,
the remaining training samples are more likely to have non-overlapping $\vG$ compared with the test set,
which makes the task a better testbed for systematic generalization.

The implementation of the GCN/GIN backbone used in this work is taken from the open-source code released by OGB \cite{ogb}.
We use the default setting of hyperparameters for all experiments (including baseline, baseline+, and interaction phase of SEM-only and SEM-IL).
For the backbone structure,
the depth of the GCN/GIN is 5, 
hidden embedding is 300, 
the pooling method is taking the mean, etc.
For the training on downstream tasks,
we use the AdamW \cite{loshchilov2018decoupled} optimizer with a learning rate of $10^{-3}$,
and use a cosine decay scheduler to stable the training.
For the SEM layer,
we search $L$ from $[10,200]$ and $V$ from $[5,100]$ on the validation set.
For the IL-related methods,
we select the imitation steps from \{1,000; 5,000; 10,000; 50,000; 100,000\}.

\end{document}